\documentclass[11pt,letterpaper]{article}
\def\draft{1}
\def\doubleblind{1} 

\usepackage{arxiv}

\makeatletter
\@ifundefined{headeright}{}{}
\@ifundefined{undertitle}{}{}
\@ifundefined{shorttitle}{}{}
\makeatother

\usepackage{amsmath,amssymb,amsfonts,amsthm,mathtools}
\usepackage{thmtools}
\usepackage{bbm}
\usepackage{xcolor}
\usepackage{graphicx}
\usepackage{enumitem}
\usepackage{booktabs}
\usepackage{tabularx}
\usepackage{array}
\usepackage{algorithm}
\usepackage{algpseudocode}
\usepackage{enumitem}

\usepackage[utf8]{inputenc}
\usepackage[T1]{fontenc}
\usepackage{microtype}
\usepackage{amsthm}
\usepackage{amsthm, amsmath, amssymb, mathtools}
\usepackage{bm,bbm}
\usepackage{graphicx}
\usepackage{color,xcolor}
\usepackage{enumitem}
\usepackage{mathrsfs}
\usepackage[normalem]{ulem}
\usepackage{tabularx}
\usepackage{tikz}
\usetikzlibrary{shapes.geometric, arrows.meta, calc, backgrounds, positioning}
\usetikzlibrary{decorations.pathreplacing}
\usetikzlibrary{patterns}
\usepackage{pgfplots}
\pgfplotsset{compat=1.18}
\usepackage{amsmath, tikz-cd}
\usepackage{caption,comment}
\usepackage{booktabs}
\usepackage{algorithmicx}
\usepackage{algorithm}
\usepackage{algpseudocode}
\usepackage{fontawesome}
\usepackage[dvipsnames]{xcolor}
\usepackage[textsize=tiny]{todonotes}
\usepackage{standalone}
\usepackage{cancel}
\usepackage{xr-hyper}
\usepackage{hyperref}
\hypersetup{colorlinks=true,citecolor=Aquamarine, linkcolor=magenta, urlcolor=blue}
\usepackage{parskip}

\newcommand{\blind}[2]{{\ifnum\draft=1\color{purple}\fi \ifnum\doubleblind=1#2\fi\ifnum\doubleblind=0#1\fi\ifnum\doubleblind=2$\{$ #1 $\vert$ #2 $\}$\fi}}

\makeatletter
\newcommand{\algmargin}{\the\ALG@thistlm}
\algnewcommand{\parState}[1]{\State%
  \parbox[t]{\dimexpr\linewidth-\algmargin}{\strut #1\strut}}

\makeatletter
\@ifpackageloaded{hyperref}{%
  \hypersetup{colorlinks=true, allcolors=blue}%
}{%
  \usepackage[colorlinks=true, allcolors=blue]{hyperref}%
}
\@ifpackageloaded{cleveref}{}{%
  \usepackage[capitalize,noabbrev,nameinlink]{cleveref}%
}
\makeatother

\usepackage[numbers]{natbib}

\usepackage{amsthm}
\usepackage{thmtools,thm-restate}

\numberwithin{equation}{section}
\declaretheoremstyle[bodyfont=\it,qed=\qedsymbol]{noproofstyle}

\declaretheorem[name=Observation,numbered=no]{observation*}

\declaretheorem[numberlike=equation]{theorem}

\declaretheorem[name=Theorem,numbered=no]{theorem*}

\declaretheorem[numberlike=equation]{lemma}
\declaretheorem[name=Lemma,numbered=no]{lemma*}

\declaretheorem[numberlike=equation]{corollary}
\declaretheorem[name=Corollary,numbered=no]{corollary*}

\declaretheorem[name=Proposition,numbered=no]{proposition*}

\declaretheorem[numberlike=equation]{claim}
\declaretheorem[name=Claim,numbered=no]{claim*}

\declaretheorem[name=Conjecture,numbered=no]{conjecture*}

\declaretheorem[name=Question,numbered=no]{question*}

\declaretheoremstyle[bodyfont=\it]{defstyle} 

\declaretheorem[numberlike=equation,style=defstyle]{definition}
\declaretheorem[unnumbered,name=Definition,style=defstyle]{definition*}

\declaretheorem[unnumbered,name=Example,style=defstyle]{example*}

\declaretheorem[unnumbered,name=Notation=defstyle]{notation*}

\declaretheorem[unnumbered,name=Construction,style=defstyle]{construction*}

\declaretheoremstyle[]{rmkstyle} 

\newtheorem*{remark}{Remark}

\crefname{claim}{Claim}{Claims}
\crefname{fact}{Fact}{Facts}

\DeclarePairedDelimiterX{\divx}[2]{(}{)}{#1 \;\delimsize\|\; #2}

\usepackage{makecell}
\usepackage{epstopdf}
\usepackage{subcaption}
\usepackage{multirow}
\usepackage{threeparttable}
\usepackage{wrapfig}
\usepackage{pifont}
\usepackage{nicefrac}

\providecommand{\REQUIRE}{\Require}

\providecommand{\STATE}{\State}
\providecommand{\IF}[1]{\If{#1}}

\providecommand{\ELSE}{\Else}
\providecommand{\ENDIF}{\EndIf}
\providecommand{\FOR}[1]{\For{#1}}

\providecommand{\ENDFOR}{\EndFor}

\allowdisplaybreaks
\makeatletter
\@ifundefined{assumption}{\declaretheorem[numberlike=equation,style=defstyle]{assumption}}{}
\@ifundefined{hypothesis}{}{}
\makeatother

\crefname{assumption}{Assumption}{Assumptions}
\Crefname{assumption}{Assumption}{Assumptions}
\crefname{hypothesis}{Hypothesis}{Hypotheses}
\Crefname{hypothesis}{Hypothesis}{Hypotheses}

\DeclareMathOperator*{\argmin}{arg\,min}

\mathcode`*="8000
{\catcode`*=\active
\gdef*{\star}}

\newcommand{\COCO}{\textsf{COCO }}
\newcommand{\CExperts}{\textsf{Constrained Experts }}

\makeatletter
\@ifundefined{c@commentcounter}{\newcounter{commentcounter}}{}
\makeatother

\providecommand{\cmt}[1]{%
  \refstepcounter{commentcounter}%
  \textcolor{red}{\textbf{[(\thecommentcounter) AS: #1]}}%
}

\let\geq\geqslant
\let\leq\leqslant
\let\ge\geqslant
\let\le\leqslant

\newcommand{\lnorm}{\left\lVert}
\newcommand{\rnorm}{\right\rVert}

\providecommand{\Regret}{\mathsf{Regret}}
\providecommand{\Reg}{\mathsf{Regret}}
\providecommand{\CCV}{\mathsf{CCV}}

\providecommand{\diam}{\operatorname{diam}}
\providecommand{\proj}{\Pi}
\providecommand{\RR}{\mathbb R}

\providecommand{\blind}[2]{%
  {\ifnum\draft=1\color{purple}\fi
  \ifnum\doubleblind=1 #2\fi
  \ifnum\doubleblind=0 #1\fi
  \ifnum\doubleblind=2 $\{$ #1 $\vert$ #2 $\}$\fi}%
}

\title{A Geometric Approach to Constrained Online Learning}

\author{
  Dhruv Sarkar\textsuperscript{1,2} \quad
  Abhishek Sinha\textsuperscript{3} \\[3pt]
  \normalfont\small\textsuperscript{1}Indian Institute of Technology Kharagpur \\[-1pt]
  \normalfont\small\textsuperscript{2}Mohamed bin Zayed University of Artificial Intelligence \\[-1pt]
  \normalfont\small\textsuperscript{3}Tata Institute of Fundamental Research, Mumbai \\[2pt]
  \normalfont\small\texttt{dhruv.sarkar223@gmail.com} \quad \texttt{abhishek.sinha@tifr.res.in} \\[2pt]
  }

\date{}

\begin{document}

\maketitle

\begin{abstract}
We consider an online learning problem with time-varying adversarial constraints. At each round, a learner selects an action from a bounded convex decision set before observing a loss function and a constraint function, both of which are chosen by an adaptive adversary. The goal of the learner is to obtain the minimax-optimal regret for the loss functions relative to the best fixed action satisfying all constraints in hindsight
and to simultaneously minimize the cumulative constraint violation ($\CCV$) corresponding to the constraint functions.
Prior to this work, the best-known algorithms achieve $O(\log T)$ $\Regret$ and $O(\sqrt{T\log T})$ $\CCV$ for strongly convex losses 
and $O(\sqrt{T})$ $\Regret$ and $O(\sqrt{T}\log T)$ $\CCV$ for general convex losses. 
In this paper, we present an iterated projection-based algorithm \textsf{NP-OGD} that attains $O(\log T)$ $\Regret$ and $O(\log T)$ $\CCV$ for strongly convex losses - reducing the $\CCV$ from polynomial to logarithmic. For general convex losses, our algorithm achieves $O(\sqrt{T})$ $\Regret$ and $O(\sqrt{T})$ $\CCV,$ eliminating an extra logarithmic factor from prior bounds. The key to our analysis is a recent geometric result on the path length of self-contracted curves. In particular, we show that when appropriately lifted to a higher-dimensional space, the iterates produced by the nested projected gradient-descent algorithm satisfy a self-contraction property with respect to a non-standard norm. Furthermore, by utilizing a layered sphere-packing construction, we complement our achievability result by establishing an $\Omega\big(\frac{(\log T)^{\frac{d-1}{d+1}}}{\log \log T}\big)$ lower bound for $\CCV$ for strongly convex losses for any online no-regret algorithm. We further give stronger lower bounds for the \(\CCV\) in the convex-loss setting: \(\Omega(T^{\frac{d-1}{2(d+3)}})\) for weakly adaptive algorithms.  Finally, we study a special case of $\COCO$where the decision set is the probability simplex over $N$ experts. We show that a simple modification of the classical \textsf{Hedge} algorithm achieves the minimax-optimal regret bound of $O(\sqrt{T\log N})$ while incurring only $O(N)$ $\CCV$. Using a similar argument as $\mathsf{COCO}$, we also establish a matching minimax $\CCV$ lower bound for this case. 
\end{abstract}

\newpage
\tableofcontents
\newpage

\pagenumbering{arabic}

\section{Introduction}

Online convex optimization (OCO) is a standard framework for modelling and analyzing sequential decision
making under uncertainty \cite{hazan2022introduction}. In this framework, at the beginning of each round \(t \in [T]\), a learner \footnote{We will use the terms learner and online algorithm interchangeably.} selects an action
\(x_t\) from a closed, $d$-dimensional convex decision set $\mathcal{X} \subseteq \mathbb{R}^d$ of bounded Euclidean diameter $D$. Subsequently, the adversary reveals a $G$-Lipschitz convex loss function
\(f_t:\mathcal{X} \to\mathbb R\), and the learner incurs a cost of \(f_t(x_t)\) for round $t$.  
The learner's performance is measured using the standard $\Regret$ metric, which compares the cumulative cost incurred by the learner with that of the best fixed action in hindsight: \begin{eqnarray} \label{reg-def}
     \Reg_T
        :=
        \sum_{t=1}^Tf_t(x_t)
        -
        \min_{x\in \mathcal{X}}\sum_{t=1}^Tf_t(x).	
\end{eqnarray}
   
Many efficient algorithms are known for this problem achieving the minimax-optimal regret guarantee. In particular, for general convex and $\alpha$-strongly convex loss functions, it is well-known that the ubiquitous online gradient descent policy (\textsf{OGD}) achieves the minimax-optimal regret of $O(\sqrt{T})$ and $O(\log T)$ respectively with appropriate step sizes \cite{hazan2022introduction}. The online gradient descent policy belongs to the general class of online mirror descent (\textsf{OMD}) policies whose generic update at round $t+1$ takes the following form:
\begin{eqnarray} \label{gen-omd-0}
 x_{t+1}=\arg \min_{x \in {\mathcal{X}}} \big(\eta_t \langle \nabla f_t(x_t), x \rangle  + D(x, x_t)\big),	
\end{eqnarray}
where $\{\eta_t >0, t\geq 1\}$ is an appropriate step-size sequence and $D(\cdot, \cdot)$ is a proximal term, given by the Bregman divergence with respect to a strongly convex function \cite{hazan2022introduction}. For the \textsf{OGD} policy, the proximal term is simply equal to the squared Euclidean distance. Similarly, choosing $D(\cdot, \cdot)$ to be the KL-divergence yields the celebrated Exponential Weight (\textsf{Hedge}) policy.

Constrained Online Convex Optimization ($\mathsf{COCO}$) generalizes the OCO framework by incorporating long-term constraints \cite{sinha2024optimal, guo2022online}. Consistent with OCO, at the beginning of each round $t \in [T],$ the learner selects an action $x_t$ from a closed and convex decision set $\mathcal{X}.$ Subsequently, the adversary reveals two functions: a $G$-Lipschitz convex cost function $f_t: \mathcal{X} \mapsto \mathbb{R}$ and a closed convex, $G$-Lipschitz constraint function $g_t: \mathcal{X} \mapsto \mathbb{R}.$ 
 The constraint function $g_t$ corresponds to an instantaneous constraint of the form $g_t(x) \leq 0.$ Since the cost and constraint functions are revealed by the adaptive adversary after the learner selects its action at every round, the learner cannot possibly satisfy the constraint on every round, thus incurring a constraint violation of magnitude $(g_t(x_t))^+$ at round $t$. In the COCO problem, the informal goal of the learner is to minimize the $\Regret$ while simultaneously minimizing the cumulative constraint violation ($\CCV$). Specifically, we are interested in minimizing the $\CCV$ while achieving the minimax-optimal $\Regret$ bound. We focus on the dependence of the bound on the horizon length $T$ while ignoring their dependence on other parameters, \emph{e.g.}, the dimension $d,$ which is assumed to be a fixed constant throughout. 

To state the goal formally, define the sets
\begin{eqnarray} \label{feas-set}
        S_t
        :=
        \bigcap_{\tau=1}^t\{x \in \mathcal X:g_\tau(x)\le0\}, \quad t \geq 1,
\end{eqnarray}
denoting the intersection of all feasible sets (with respect to the revealed constraints) after $t$ rounds, with $S_0 = \mathcal{X}.$ Clearly, the sets $\{S_t\}_{t\geq 1}$ are closed, convex and nested:
\[
        \mathcal X=S_0\supseteq S_1\supseteq S_2\supseteq\cdots\supseteq S_T \equiv \mathcal{X}^* (\text{say}).
\]
This nested structure is one of the key
geometric features of the problem that we will exploit in the sequel. We make the following assumption.
\begin{assumption}[$\mathsf{Common~Feasibility}$] \label{common-feasibility}
	There exists a fixed action that satisfies all constraints, \emph{i.e.,} $S_T \neq \emptyset$.
\end{assumption}
 The performance of an online $\COCO$algorithm is measured against the set of all fixed offline actions in $S_T = \mathcal{X}^*$ that satisfy all constraints. Without Assumption \ref{common-feasibility}, the benchmark set is empty and regret with respect to feasible actions is undefined. Since ideally we want to minimize the cumulative cost while satisfying the adversarial constraints, our formal objective is to control the following two metrics simultaneously: 
\begin{eqnarray} \label{reg-ccv-def}
        \Reg_T
        :=
        \sum_{t=1}^Tf_t(x_t)
        -
        \min_{x\in \mathcal{X}^*}\sum_{t=1}^Tf_t(x), \quad \CCV_T := \sum_{t=1}^T (g_t(x_t))^+,
\end{eqnarray}
where we define $(z)^+ \equiv (z)_+ \equiv \max(0,z), ~z \in \mathbb{R}.$

%
%
%
\paragraph{Key algorithmic ideas:}
Our primary intuition is that had we known the common feasible set $\mathcal{X}^\star$ \emph{before} the game begins, we could have just run a standard OCO policy, \emph{e.g.,} Online Mirror Descent (\textsf{OMD}), with the action set restricted to the common feasible set $\mathcal{X}^*.$ This strategy trivially achieves the minimax-optimal $\Regret$ and a zero $\CCV$. However, since the constraints are revealed in a sequential fashion, the online policy gets to learn the final feasible set $S_T = \mathcal{X}^*$ only at the end of the horizon. This motivates the following online approximation where, at each round $t\ge 1,$ the feasible set $\mathcal{X}^*$ is approximated with $S_t$ - the set of all actions satisfying the constraints up to round $t$. We incorporate the knowledge of the current feasible set $S_t$ into the generic \textsf{OMD} policy and consider the following family of algorithms whose action at round $t$ is given by the following update rule:  
\begin{eqnarray} \label{gen-omd}
 x_{t+1}=\arg \min_{x \in {S_t}} \big(\eta_t \langle \nabla f_t(x_t), x \rangle  + D(x, x_t)\big),	
\end{eqnarray}
 The update rule \eqref{gen-omd} is motivated by the generic iteration \eqref{gen-omd-0} where instead of minimizing the objective over the entire decision set at round $t,$ we minimize it over the current feasible set $S_t$. Since the action $x_{t+1} \in S_t$ may not belong to the common feasible set $S_T,$ the online policy \eqref{gen-omd} may incur a constraint violation, which we need to bound, along with its $\Regret$. We apply the above generic algorithmic template to two concrete settings (1) general \textsf{COCO} and (2) a special case \textsf{Constrained Experts}.

 \subsection{Main Results:}

\paragraph{1. \textsf{COCO}:} Specializing the above policy for the \COCO problem upon choosing $D(\cdot, \cdot)$ to be the squared Euclidean distance, the update in \eqref{gen-omd} reduces to \textsf{Nested Projected Online Gradient Descent} algorithm (\textsf{NP-OGD}), formally described in Algorithm \ref{alg:nested-projected-ogd}.  Any algorithm in this class takes a gradient step on the last revealed cost function at each round and then projects it onto the intersection of the feasible sets revealed so far. In this case, the successive iteration \eqref{gen-omd} simplifies to: 
\begin{eqnarray} \label{pgd-generic}
	    x_{t+1}=\Pi_{S_t}(x_t - e_t), 
\end{eqnarray}
    where the vector $e_t$ is equal to a scaled (sub)gradient of the loss function $f_t$ and $\Pi_{C}(\cdot)$ is the standard Euclidean projection operator on the closed convex set $C$. This algorithm was considered earlier by \cite{vaze2026sqrt} where they showed that it achieves $O(\sqrt{T})$ regret and an \emph{instance}-dependent $\CCV$, which can be bounded by a constant for some special instances. However,  for arbitrary convex constraint sets, the $\CCV$ bound could still be $\Omega(T)$ in the worst case.   
        
We first show that, with an appropriate choice for the step size sequence, the \textsf{NP-OGD} algorithm achieves the minimax-optimal regret of $O(\sqrt{T})$ for convex and $O(\log T)$ regret for strongly convex losses respectively.  
%
%
%
%
Hence, the main remaining challenge is to control the $\CCV$ for this class of algorithms. From the Lipschitzness of the constraint functions, it follows that the $\CCV$ is bounded by the total movement cost of the algorithm, given by $\sum_{t=1}^{T} \lnorm x_{t+1}-x_t \rnorm_2.$  Due to successive projection operations, this
movement cost \emph{cannot} be bounded by the step lengths alone as the 
projection onto the smaller set \(S_t\) may create significant additional movement. This leads to the key technical contribution of the paper described below. 

\paragraph{Approximate Self-contraction of \textsf{NP-OGD} Trajectory:} We obtain a new movement bound for the perturbed nested projection trajectories \eqref{pgd-generic} by connecting them to self-contracted curves through a lifting argument (Lemma \ref{lem:robust-nested-projection}). As discussed above, the key technical difficulty for bounding the movement cost is that projection onto successively shrinking feasible regions introduces additional movement beyond the gradient step itself. The movement lemma (Lemma \ref{lem:robust-nested-projection}) shows that these additional projection effects do not accumulate arbitrarily; instead they inherit a finite-length property through an approximate \emph{self-contraction} property (see Section \ref{prelims} for the associated definitions). In particular, we show that the movement cost for the generic iterations \eqref{pgd-generic} with arbitrary nested convex sets can be upper-bounded as:
\begin{eqnarray} \label{movement-bound-intro}
        \sum_{t=1}^T\|x_{t+1}-x_t\|_2
        \le
        C_{d}
        \left(
            D+\sum_{t=1}^T\|e_t\|_2
        \right),
\end{eqnarray}
where \(C_d\) is a constant depending only on the dimension $d$ and independent of $T$. To the best of our knowledge, this is the \emph{first} movement bound for the ubiquitous gradient descent algorithm with arbitrary nested projections. This result immediately yields the following improvements over the state-of-the-art. 

 \begin{enumerate}
	\item For strongly convex loss functions, Theorem \ref{thm:main} improves the $\CCV$ bound to $O(\log T)$ from the previous best-known bound of $O(\sqrt{T})$ while retaining the minimax-optimal $O(\log T) $ regret guarantee.  
	\item For general convex losses, Theorem \ref{thm:main} also improves the best-known $\CCV$ bound by a factor of $(\log T)$ while retaining the optimal $O(\sqrt{T})$ regret guarantee. Removing the logarithmic factor from $\CCV$ has been open in the literature and there have been previous attempts to close the gap \cite{ferreira2025optimalboundsadversarialconstrained}. However, the proof was later found to contain an error. 
	\item Furthermore, by directly incorporating the constraint function in the objective of the iterated projected gradient algorithm, Section \ref{structure-section} yields $O(\log T)$ $\Regret$ and $O(\log T)$ $\CCV$ under an additional assumption of non-negative terminal $\CCV$, for general convex loss and strongly convex constraint functions.   
\end{enumerate}

The proposed algorithm is anytime, \emph{i.e.,} does not need to know the horizon length $T.$ Furthermore, for  items 1 and 2 above, the algorithm only requires quasi-convexity of the constraint functions \footnote{Recall that a function is called quasi-convex if its level sets are convex. All convex functions are quasi-convex \cite{boyd}.}. 
	The most computationally expensive operation at each step is the projection step at each round, which can be computed efficiently for many constraint functions (\emph{e.g.,} $\ell_2$-norms).
\paragraph{2. The $\CExperts$problem:} To supplement our main results for the $\COCO$problem, in Section \ref{experts-sec}, we consider a special case where the decision set $\mathcal{X}$ is restricted to be the probability simplex $\Delta_N$ and the loss and constraint functions are linear, \emph{i.e.,} $f_t(p) = \langle l_t, p \rangle, g_t(p) = \langle g_t,p\rangle, p \in \Delta_N$, for some $N$-length vectors $l_t, g_t \in [-1,1]^N$ \footnote{To emphasize the fact that here the actions are probability vectors, we denote them by the variable $p$ instead of $x$ as in \COCO.}. Each coordinate of the loss vector corresponds to an \emph{Expert} where the online algorithm chooses a convex combination of the experts at each round. The set of feasible experts up to round $t$ satisfying the constraints revealed so far is given by 
\[S_t = \{i \in [N]: g_\tau(i) \leq 0, 1\leq \tau \leq t\}, \quad t \geq 1. \]

In this case, choosing the proximal function \(D(\cdot,\cdot)\) in~\eqref{gen-omd} to be the
KL-divergence and restricting the update to the active simplex
\[
        \Delta(S_t):=\{p\in\Delta_N:p(i)=0\ \forall i\notin S_t\},
\]
the one-step update specializes to
\[
        p_{t+1}
        =
        \argmin_{p\in\Delta(S_t)}
        \left\{
            \eta_t\langle \ell_t,p\rangle
            +
            D_{\rm KL}(p\|p_t)
        \right\}.
\]
Solving the first-order optimality conditions gives the following $\textsf{Active Hedge}$ policy
\[
        p_{t+1}(i)
        =
        \frac{p_t(i)\exp(-\eta_t\ell_t(i))}
        {\sum_{j\in S_t}p_t(j)\exp(-\eta_t\ell_t(j))}
        \mathbf 1\{i\in S_t\}.
\]
Thus, using the KL-divergence for $D(\cdot, \cdot)$ in~\eqref{gen-omd} leads to the active Hedge algorithm, formally described later in Algorithm~\ref{alg:active-hedge-full-info}.
Theorem \ref{thm:active-hedge-distribution-version} shows that Algorithm \ref{alg:active-hedge-full-info} achieves the minimax-optimal $O(\sqrt{T \ln N})$ $\Regret$ and $O(N)$ $\CCV$, which is independent of the horizon length $T.$ This improves upon the prior result \cite[Theorem 2]{sinha2026beyond}, for sufficiently large $T$ (when $T \geq \tilde{\Omega}(N^2)$), which gave an algorithm achieving $O(\sqrt{T \ln N})$ $\Regret$ and $O(\sqrt{T} \ln N \ln T)$ $\CCV$. Compared to the intricate geometric argument for the $\COCO$problem, the proof technique for the constrained experts problem is elementary where  we exploit the essentially discrete nature of the problem. The regret bound follows upon modifying the standard proof arguments for the Hedge policy with a time-varying potential which keeps track of the currently feasible experts. The $\CCV$ bound follows from the observation that as soon as an expert violates a constraint, it is eliminated from further consideration. Surprisingly, as we discuss below, these bounds are tight for large $T$.   

\paragraph{3. Lower bounds:} In Sections~\ref{lower_bound_section} and~\ref{sec:cexperts-lb}, we establish
minimax lower bounds for the \(\COCO\) problem with strongly convex losses and
for the \(\CExperts\) problem, respectively.  These results complement the lower
bound of~\cite{sinha2024optimal}, who showed an
\((\Omega(\sqrt T),\Omega(\sqrt T))\) lower bound for the
\((\Regret,\CCV)\) pair in \(\COCO\) with general convex losses and general
convex constraints.  Their construction, however, is inherently
high-dimensional: it uses a fresh coordinate in each phase and therefore
requires dimension at least of order \(T\).  Consequently, it does not rule out
substantially smaller \(\CCV\) in the fixed-dimensional regime, which is the
regime studied in this paper.  Our lower bounds address precisely this gap by
showing that even when the dimension \(d\) is fixed, the geometry of nested
feasible regions imposes nontrivial lower bounds on the attainable \(\CCV\).

The common proof idea for the regret-budgeted lower bounds in
$\COCO$ (Section~\ref{lower_bound_section}) and $\CExperts$ (Section~\ref{sec:cexperts-lb}) is to consider a
sequence of phases with geometrically increasing lengths. During phase $j$, the cost function remains invariant throughout which is minimized at a chosen feasible point $z_j$. To maintain the no-regret property, we show that any online algorithm must play an action close to $z_j$ during phase $j$. Immediately after it happens, the adversary reveals an affine constraint function that makes action $z_j$ infeasible and thus the online algorithm incurs a constraint violation. The key challenge is to construct a set of \emph{well-separated} constraint functions so that we can continue the above construction for many phases, resulting in a tight $\CCV$ lower bound. Towards this, we utilize the geometry of $d$-dimensional Euclidean ball for $\COCO$ with a layered sphere-packing construction (see Figure \ref{fig:layered_cuts}). The construction for the $\CExperts$problem is considerably simpler where the constraints correspond to separate arms.  

We also prove an additional lower-bound results for $\COCO$ in the convex-loss setting.
In Section~\ref{cvx-loss-lb}, we show that any algorithm
satisfying a weak interval-regret guarantee must incur polynomial \(\CCV\) of $\Omega(T^{(d-1)/(2(d+3))})$;
this gives a convex-loss lower bound beyond the strongly convex construction of
Section~\ref{lower_bound_section}.  Thus, while our
upper bounds show that NP-OGD improves the best-known \(\CCV\) guarantees, the
lower bounds show that its general convex-loss guarantee cannot be improved to
a polylogarithmic bound by analysis alone. Independent and concurrent work by \cite{balasundaram2026lowerboundcumulativeconstrained} shows a sharper lower bound of $\Omega(T^{(d-1)/(2d})$ for the specific \textsf{NP-OGD} algorithm.




\begin{table*}[t]
\centering
\renewcommand{\arraystretch}{1.5} 
\small 

\begin{tabularx}{\textwidth}{lllll} 
\toprule
\textbf{Reference} &
\makecell{\textbf{$(\Reg,\CCV)$}\\\textbf{Convex Loss}} &
\makecell{\textbf{$(\Reg,\CCV)$}\\\textbf{Strongly Convex Loss}} &
\makecell{\textbf{Computation} \\ \textbf{at round $t$}} &
\textbf{Assumptions} \\
\midrule
\cite{guo2022online} & $\big(O(\sqrt{T}), O(T^{3/4})\big)$ & $\big(O(\log{T}), O(\sqrt{T \log T})\big)$ & Convex Opt. & ---\\
\cite{sinha2024optimal} & $\big(O(\sqrt{T}), O(\sqrt{T}\log T)\big)$ & $\big(O(\log{T}), O(\sqrt{T \log T})\big)$ & $\mathsf{Proj}_{\mathcal{X}}(\cdot)$ & ---\\
\cite{sinha2024optimal} & $\big(\Omega(\sqrt{T}), \Omega(\sqrt{T})\big)$ & --- & \textrm{Lower bound} & $d \ge T$\\
\cite{vaze2026sqrt} & $\big(O(\sqrt{T}), \textrm{instance~dependent}) $ & --- & $\mathsf{Proj}_{S_t}(\cdot)$ & ---\\
\textbf{This paper} & $\big(O(\sqrt{T}), O(\sqrt{T})\big)$ & $\big(O(\log{T}), O(\log T)\big)$ & $\mathsf{Proj}_{\mathcal{S}_t}(\cdot)$ & --- \\
\textbf{This paper} & $\big(O(\log{T}), O(\log T)\big)$ & --- & $\mathsf{Proj}_{\mathcal{S}_t}(\cdot)$ & \makecell{\small{$\CCV\geq 0$, Strongly} \\  \small{convex constraint}} \\
\textbf{This paper} & --- &$\CCV_T \geq \frac{\Omega(\log T)^{\frac{d-1}{d+1}}}{\ln\ln T}$ &  \textrm{Lower bound}  & \textrm{No-regret policies} \\
\textbf{This paper}
&
\(\CCV_T\ge \Omega(T^{\frac{d-1}{2(d+3)}})\)
&
---
&
Lower bound
&
Weak interval regret
\\
\bottomrule
\end{tabularx}
\caption{\small{Comparison of our results with prior work for $\mathsf{COCO}$. $\mathcal{S}_t$ denotes the feasible set up to round $t.$}}
\label{tab:comparison}
\end{table*}


\begin{table*}[t]
\centering
\renewcommand{\arraystretch}{1.5} 
\small 
\begin{tabularx}{0.94\linewidth}{
    l
    c
    c
    c
    >{\centering\arraybackslash}X
}
\toprule
\textbf{Reference}
&
\textbf{$\Regret$}
&
\textbf{$\CCV$}
&
\textbf{Algorithm}
&
\textbf{Assumptions}
\\
\midrule
\cite{sinha2026beyond}
&
\(O(\sqrt{T\log N})\)
&
\(O(\sqrt{T\log N}\log T)\)
&
Hedge
&
 \makecell{\small{Linear losses}\\/constraints}
\\
\textbf{This paper}
&
\(O(\sqrt{T\log N})\)
&
\(O(N)\)
&
Active Hedge on \(\Delta(S_t)\)
&
 \makecell{\small{Common feasible}\\ expert}
\\
\textbf{This paper}
&
--
&
\(\Omega(N)\)
&
Lower bound
&
No-regret policies
\\
\bottomrule
\end{tabularx}
\vspace{2mm}
\caption{\small{Comparison of our results with prior work for the $\CExperts$problem. $N$ denotes the number of experts.}}
\label{tab:constrained-experts-comparison}
\end{table*}

\paragraph{Organization.}
Section \ref{related-work} discusses relevant literature to put our results into proper context. Section \ref{prelims} briefly reviews a recent geometric result on self-contracted curves that will be key to our analysis. Section \ref{sec:nested-projected-ogd}
presents the general class of Nested Projected OGD algorithms and presents our main result on the $\Regret$ and
$\CCV$ guarantees for both general convex and strongly convex losses. Section \ref{lower_bound_section} presents a lower bound to $\CCV$ for strongly convex losses. Section~\ref{cvx-loss-lb} gives a weak-adaptivity lower bound for
convex losses. Section \ref{experts-sec} considers the $\CExperts$ problem and presents an algorithm, upper, and lower bounds. 
 Finally, Section \ref{sec:conclusion} concludes with
limitations and open directions. 

\section{Related work} \label{related-work}
%

The $\COCO$problem has been investigated in a broad line of work on
online convex optimization with long-term, time-varying, and adversarial
constraints.  Since the constraints are chosen by an adversary before the learner selects its action at each round, the learner is not
required to satisfy every constraint at every time; but is required to control the
cumulative violation of the constraints.  Early primal-dual and Lyapunov-based
methods showed that one can trade regret against cumulative constraint violation ($\CCV$)
under various assumptions on the constraint sequence, such as fixed constraints,
stochastic constraints, Slater-type conditions, or weaker adversarial models
\cite{mahdavi2012trading,jenatton2016adaptive,neely2017online,yuan2018online,
yi2021regret,guo2022online,yi2023distributed}.  These works typically do not exploit the
geometry of the nested feasible sets to the fullest extent that is the central feature of this problem and result in sub-optimal $\CCV$ guarantees. 

The adversarial $\COCO$formulation studied in this paper was brought into sharp focus by
Sinha and Vaze~\cite{sinha2024optimal}.  
They proposed a first-order
algorithm with a surrogate cost function that linearly combines the cost and constraint functions. The construction of the surrogate is  based on Lyapunov drift arguments and obtained \(O(\sqrt T)\) regret with
\(\widetilde O(\sqrt T)\) CCV for general convex losses, and $O(\log T)$ regret
with \(\widetilde O(\sqrt T)\) $\CCV$ for strongly convex losses. Their analysis
also yields a logarithmic $\CCV$ in the strongly convex setting under an additional
nonnegative-regret condition.  Their
algorithm projects the iterates onto the original decision set \(\mathcal{X}\) and incorporates
constraints through a Lyapunov-weighted surrogate loss.  On the contrary, the nested-projection method studied here projects the iterates directly onto the time-varying feasible sets
\[
        S_t
        =
        \mathcal{X}\cap\bigcap_{\tau\le t}\{x:g_\tau(x)\le0\}, ~~ t \ge 1.
\]
In particular, since their algorithm only utilizes the gradient information of the constraint functions, it does not fully exploit the geometry of the constraint sets. We fill this gap with our movement lemma (Lemma \ref{lem:robust-nested-projection}). 

A subsequent line of work initiated by Vaze and Sinha~\cite{vaze2026sqrt}
showed that the nested geometry of the sets \(S_t\) can lead to substantially
smaller instance-dependent $\CCV$.  The resulting violation bound depends on geometric features of
the sequence \(S_1\supseteq S_2\supseteq\cdots\), and can be \(O(1)\) for
special structured instances.  This work was one of the first to show that
$\CCV$ should not be viewed merely as another regret term for the constraint
functions \(g_t\); rather, the nested feasible-set geometry can be exploited
directly.  However, their guarantee is instance-dependent, and in the worst case
the bound does not yield the logarithmic strongly convex result obtained here.

Balasundaram, Mahendran, and Vaze~\cite{balasundaram2026breaking} pushed this
geometric viewpoint further in dimension \(d=2\).  They proved that the
projection-based algorithm of Vaze and Sinha achieves \(O(\sqrt T)\) regret and
\(O(T^{1/3})\) CCV for general convex losses in the plane, thereby breaking the
previous \(\widetilde O(\sqrt T)\) violation barrier.  Their proof uses planar
cap geometry: when projecting from \(S_{t-1}\) onto \(S_t\), either the area or
the perimeter of the current feasible set decreases sufficiently.  They also
explicitly conjectured that in \(d=2\), the same projection-based algorithm may
in fact have \(O(1)\) CCV while maintaining $O(\sqrt{T})$ regret, and that a more refined amortization of projection
vectors over time might prove this.  Our lower-bound section disproves this conjecture:
already in dimension two, our constructed instance forces time-growing $\CCV$ of $\frac{(\log T)^{1/3}}{\log \log T}$ for
any no-regret online algorithm. 

A related but distinct problem is Convex Optimization with Nested Evolving
Feasible Sets (CONES), studied by
Karthick Krishna, Balasundaram, and Vaze~\cite{m2026convexoptimizationnestedevolving}.
In CONES, the loss function \(f\) is fixed and known, and the learner
observes the nested feasible set \(S_t\) \emph{before} choosing a feasible action
\(x_t\in S_t\).  The performance metric is regret with respect to the final
hindsight optimum together with total movement cost.  For  fixed strongly convex loss function, they presented an algorithm that achieves nonpositive regret and
\(O(\log T)\) movement cost. In comparison, we work in a significantly more general setting - our loss function can be adversarially time-varying and at each round, the learner chooses an action \emph{before} observing the loss and the constraint function for that round. In this setting, we establish an $O(\log T)$ bound for the $\Regret$ and $O(\log T)$ bound for the movement cost.  


\section{Preliminaries} \label{prelims}
Our analysis for the \COCO problem makes use of a recent deep result on self-contracted polygonal lines induced by an arbitrary norm \cite{stepanov2017self}. 
\begin{definition}[Self-contracted Polygonal Line] \label{self-contraction-def}
	Let $E$ be a metric space equipped with an arbitrary norm $\lnorm \cdot \rnorm.$ A vector $(A_1, A_2, \ldots A_r) \in E^r$ is called \emph{self-contracted} with respect to the norm $\lnorm \cdot \rnorm$ if 
	\begin{eqnarray} \label{self-contr-prop}
	\lnorm A_k - A_j \rnorm \leq \lnorm A_k - A_i\rnorm, ~~~ \forall i \leq j \leq k. 
		\end{eqnarray}
\end{definition}
In other words, as we move forward along the curve, the distance to any fixed future point never increases. The notion of self-contracted curves appears in convex optimization, gradient flows, and metric geometry because trajectories of certain descent dynamics satisfy this property \cite{daniilidis2010asymptotic, longinetti2016bounding}. In our particular application, we will take $E$ to be the $d$-dimensional Euclidean space and work with a non-standard norm that will be defined later in the sequel. The following theorem constitutes our main technical tool.

\begin{theorem}[\cite{stepanov2017self}] \label{self-contraction-thm}
	Let the vector $(A_1, A_2, \ldots, A_r) \in \mathbb{R}^{d}$ be self-contracted with respect to any norm $\lnorm \cdot \rnorm,$  then 
	\[ \sum_{i=1}^{r-1} \lnorm A_{i+1}-A_i \rnorm_2 \leq C \lnorm A_r-A_1 \rnorm_2\] 
	for some $C>0$ depending only on the norm $\lnorm \cdot \rnorm $ and on the space dimension $d$.  
\end{theorem}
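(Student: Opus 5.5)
The plan is to first reduce the endpoint bound to a bound on the diameter, and then bound the length by a multiple of the diameter through a finite family of directional projections. Throughout, $L:=\sum_{i=1}^{r-1}\lnorm A_{i+1}-A_i\rnorm_2$ denotes the length of the polygonal line.

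\textbf{Step 1 (reduction to the diameter).} Take $k=r$ and $i=1$ in \eqref{self-contr-prop}. This gives $\lnorm A_r-A_j\rnorm\le\lnorm A_r-A_1\rnorm$ for every $j$. Hence all vertices lie in the $\lnorm\cdot\rnorm$-ball of radius $R:=\lnorm A_r-A_1\rnorm$ centred at $A_r$. By equivalence of norms on $\mathbb{R}^d$, i.e. $c_1\lnorm\cdot\rnorm_2\le\lnorm\cdot\rnorm\le c_2\lnorm\cdot\rnorm_2$, the Euclidean diameter of $\{A_1,\dots,A_r\}$ is at most $2(c_2/c_1)\lnorm A_r-A_1\rnorm_2$. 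It therefore suffices to prove $L\le C'\operatorname{diam}_2\{A_i\}$ with $C'$ depending only on the norm and on $d$.

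\textbf{Step 2 (a forbidden backward cone).} For $i<j$, self-contraction says every future vertex $A_k$ ($k\ge j$) lies in the region $H_{ij}:=\{x:\lnorm x-A_j\rnorm\le\lnorm x-A_i\rnorm\}$. For the Euclidean norm this is a half-space, and one recovers the classical mean-width argument: the convex hull of the tail shrinks in mean width by at least $c_d$ times each step length. For a general norm I would instead use compactness of the unit sphere to find an aperture $\theta>0$, depending only on the norm, with the following property. The complement of $H_{j-1,j}$ contains the open cone with apex $A_{j-1}$, axis $-(A_j-A_{j-1})$ and half-angle $\theta$ (at least beyond a distance proportional to the step, which is harmless after rescaling). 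Equivalently, no future vertex lies ``behind'' the step within angle $\theta$.

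\textbf{Step 3 (finite direction net and monotone projections).} Fix a finite $\theta/4$-net $\{v_1,\dots,v_m\}$ of the Euclidean unit sphere; here $m$ depends only on $d$ and $\theta$. Assign each step $A_{j}-A_{j-1}$ to a net direction $v_\ell$ within angle $\theta/4$ of its own direction. Such a step contributes at least $\tfrac12\lnorm A_j-A_{j-1}\rnorm_2$ to the increment of $\langle v_\ell,\cdot\rangle$. By Step 2, after a $v_\ell$-step, all future vertices avoid a backward cone around $-v_\ell$. The goal is to turn this into the statement that the total positive variation of $t\mapsto\langle v_\ell,A_t\rangle$ accumulated over $v_\ell$-steps is at most a constant times $\operatorname{diam}_2$. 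Summing over $\ell$ would then give $L\le 2m\,C''\operatorname{diam}_2$.

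\textbf{Main obstacle.} The hard part is Step 3. Steps assigned to $v_\ell$ are interleaved with arbitrary other steps, so the projection is not monotone, and a cone condition alone does not prevent zig-zagging. My first attempt would be an induction on the dimension. Consider the sets $T_j:=\operatorname{conv}\{A_k:k\ge j\}$, which are nested. Each $v_\ell$-step at time $j$ forces $T_j$ to lose a definite ``cap'' in direction $-v_\ell$, of size comparable to the step, measured by a norm-adapted width functional built from the support functions in the directions of a $\theta$-cone around $-v_\ell$. Then sum these losses, telescoping against the initial width, which is bounded by the diameter. If this width decrement cannot be made uniform, I would fall back on Stepanov--Teplitskaya's own scheme: split the curve at its ``turning points'' relative to the cones, and apply the $(d-1)$-dimensional result to projections onto $v_\ell^\perp$ of maximal subarcs on which the curve is cone-monotone.
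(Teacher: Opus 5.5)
The paper gives no proof of this statement; it imports the result from Stepanov--Teplitskaya and uses it as a black box. Judged on its own, your sketch has a genuine gap.

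Step~1 is correct, and so is Step~2. In fact the cone can be taken all the way to the apex: for $y$ within a small uniform angle of $-u$, every subgradient $\phi$ of $\|\cdot\|$ at $y$ has $\langle\phi,u\rangle<0$, hence $\|y-u\|\ge\|y\|-\langle\phi,u\rangle>\|y\|$. But Step~3 is the entire theorem. You leave it open, and it cannot be derived from what Step~2 retains.

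Take $n$ laps of an inward square spiral in $\mathbb{R}^2$. Lap $m$ visits the corners of $[-s_m/2,s_m/2]^2$ counterclockwise, starting at the bottom-left one, and then steps to the bottom-left corner of the next square; $s_m$ decreases slowly from $1$ to $\tfrac12$. One checks directly that $\langle A_k-A_{j-1},A_j-A_{j-1}\rangle\ge0$ for every step $j$ and every $k\ge j$. So no later vertex ever lies in an open backward cone of any half-angle $\theta\le\pi/2$. Yet the length is at least $2n$ and the diameter at most $\sqrt2$, so the per-direction bound you aim for in Step~3 must fail for some $\ell$.

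What excludes such paths for a genuinely self-contracted sequence is the part of $\{x:\|x-A_j\|>\|x-A_{j-1}\|\}$ that your cone throws away. For the Euclidean norm this contains the slab of width $\tfrac12|A_j-A_{j-1}|$ between the hyperplane through $A_{j-1}$ orthogonal to the step and the bisector. In the spiral, each lap's top-left corner lies in this slab for that lap's bottom edge. So the region within distance comparable to the step is precisely not ``harmless after rescaling''.

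Your suggested repair through the hulls $T_j$ runs into the real obstruction. In the Euclidean case the allowed set $\{x:\|x-A_j\|\le\|x-A_{j-1}\|\}$ is a closed half-space, hence convex. So all of $T_j$ lies beyond the bisector, and with $u=A_j-A_{j-1}$, $\hat u=u/|u|$, one gets $h_{T_j}(-\hat u)\le h_{T_{j-1}}(-\hat u)-\tfrac12|u|$; this is what drives the mean-width argument. For a general norm that set is not convex, already for $\ell_1$ in the plane. Constraints on the vertices therefore do not pass to their convex hull, and no definite cap loss of $T_j$ is forced. You do not specify a width functional that circumvents this, and this is exactly why the non-Euclidean case stayed open until Stepanov--Teplitskaya found a different argument.

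Falling back on ``Stepanov--Teplitskaya's own scheme'' is circular, since that scheme is the proof of the statement itself. Note also that the paper applies the result with the non-strictly-convex norm $\|\cdot\|_{\oplus}$, so any replacement argument must not rely on strict convexity. As written, the proposal establishes only the reduction from $\|A_r-A_1\|_2$ to the diameter.
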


\begin{remark}
	It is important to note that although the vector $(A_1, A_2, \ldots, A_r)$ is self-contracted with respect to an arbitrary norm $\lnorm \cdot \rnorm,$ the length of the curve is measured with respect to the standard Euclidean norm $\lnorm \cdot \rnorm_2$. This fact will be exploited in our argument. 
\end{remark}

We will also make use of the standard Projection theorem from convex analysis, which we quote below for easy reference \cite{bertsekas2015convex}.

\begin{theorem}[Projection theorem] \label{proj-thm}
	Let $C$ be a nonempty closed convex subset of $\mathbb{R}^d,$ and let $z$ be a vector in $\mathbb{R}^d.$ There exists a unique vector that minimizes $\lnorm z-x \rnorm_2$ over $x\in C,$ called the projection of $z$ on $C,$ denoted by $\Pi_C(z).$ Furthermore, a vector $x^*$ is the projection of $z$ on $C$ if and only if 
	\[(z-x^*)'(x-x^*) \leq 0, \quad \forall x \in C.\]
	As an immediate corollary of the above result, we have the Pythagorean theorem 
	\[ \lnorm z-x \rnorm _2 \geq \lnorm x^*-x \rnorm_2, \quad \forall x \in C.\]
    and the non-expansiveness property of the projection operator:
    \[ \lnorm \Pi_C(x) - \Pi_C(y) \rnorm_2 \leq \lnorm x-y\rnorm_2, \quad \forall x, y \in \mathbb{R}^d .\]
	\[ \]
\end{theorem}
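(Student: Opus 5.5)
We prove the four parts in order: existence and uniqueness, the variational characterization, and then the Pythagorean inequality and non-expansiveness as consequences of that characterization.

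\textbf{Existence and uniqueness.} Fix any $x_0 \in C$ and set $R = \lnorm z - x_0 \rnorm_2$. Minimizing $h(x) = \lnorm z-x\rnorm_2^2$ over $C$ is the same as minimizing it over $C \cap \{x : \lnorm z-x\rnorm_2 \le R\}$. This set is closed, bounded and nonempty, hence compact, and $h$ is continuous, so Weierstrass gives a minimizer. For uniqueness, suppose $x_1 \neq x_2$ are both minimizers with common value $\delta^2$. Their midpoint lies in $C$ by convexity. The parallelogram identity gives
\[
\big\|z - \tfrac{x_1+x_2}{2}\big\|_2^2 = \tfrac12\lnorm z-x_1\rnorm_2^2 + \tfrac12\lnorm z-x_2\rnorm_2^2 - \tfrac14\lnorm x_1-x_2\rnorm_2^2 < \delta^2,
\]
which contradicts minimality.

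\textbf{Variational characterization.} The key tool is the expansion, valid for any $x^*, x \in C$:
\[
\lnorm z-x\rnorm_2^2 = \lnorm z-x^*\rnorm_2^2 + \lnorm x^*-x\rnorm_2^2 - 2(z-x^*)'(x-x^*).
\]
\emph{Sufficiency:} if $(z-x^*)'(x-x^*)\le 0$ for all $x\in C$, the expansion gives $\lnorm z-x\rnorm_2^2 \ge \lnorm z-x^*\rnorm_2^2$, so $x^*$ is the projection.

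\emph{Necessity:} let $x^* = \Pi_C(z)$ and $x \in C$. For $\lambda\in(0,1]$ the point $x_\lambda = x^*+\lambda(x-x^*)$ lies in $C$. Applying the expansion with $x_\lambda$ in place of $x$ gives
\[
0 \le \lnorm z-x_\lambda\rnorm_2^2 - \lnorm z-x^*\rnorm_2^2 = \lambda^2\lnorm x-x^*\rnorm_2^2 - 2\lambda (z-x^*)'(x-x^*).
\]
Dividing by $\lambda$ and letting $\lambda\downarrow 0$ yields $(z-x^*)'(x-x^*)\le 0$.

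\textbf{Corollaries.} The Pythagorean inequality follows from the same expansion combined with the variational inequality. Dropping the nonnegative term $\lnorm z-x^*\rnorm_2^2$ gives $\lnorm z-x\rnorm_2^2 \ge \lnorm x^*-x\rnorm_2^2$.

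For non-expansiveness, set $p=\Pi_C(x)$ and $q=\Pi_C(y)$. Apply the variational inequality at $p$ with test point $q$, and at $q$ with test point $p$:
\[
(x-p)'(q-p)\le 0, \qquad (y-q)'(p-q)\le 0.
\]
Adding the two gives $\lnorm p-q\rnorm_2^2 \le (x-y)'(p-q)$. Cauchy--Schwarz bounds the right-hand side by $\lnorm x-y\rnorm_2\,\lnorm p-q\rnorm_2$. Dividing by $\lnorm p-q\rnorm_2$ (the case $p=q$ is trivial) gives the claim.

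\textbf{Main obstacle.} None of these steps is deep. The only point needing care is the existence step, where compactness must be obtained by intersecting $C$ with a bounded sublevel set, since $C$ itself may be unbounded.
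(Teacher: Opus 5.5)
Your proof is correct and complete, and it follows essentially the same standard route as the paper's source: the paper gives no argument of its own and cites Bertsekas (Prop.~1.1.9 and related results), where existence comes from Weierstrass on a bounded sublevel set and uniqueness from strict convexity of $\|z-x\|_2^2$ (your parallelogram identity). There the variational inequality comes from the first-order optimality condition (your segment-and-limit argument), and non-expansiveness from adding the two variational inequalities and applying Cauchy--Schwarz, exactly as you do, with $x^*\in C$ implicitly assumed in the sufficiency direction as you also assume.
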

See \cite[Proposition 1.1.9, Appendix B, and  Proposition 3.2.1]{bertsekas2015convex} for the proof. 
\section{COCO via Nested Projected OGD}
\label{sec:nested-projected-ogd}
We consider a family of algorithms, called \textsf{Nested Projected Online Gradient Descent} (\texttt{NP-OGD}). At each round, an \texttt{NP-OGD} algorithm first takes a gradient step on the last revealed cost function and then projects the result onto the intersection of all feasible sets revealed so far (See Algorithm \ref{alg:nested-projected-ogd} for a Pseudocode). The \texttt{NP-OGD} algorithms only differ in their choice of the step-sizes.   



To be precise, we define the nested feasible sets 
\[
        S_t
        := \bigcap_{\tau=1}^t \{x \in \mathcal X:g_\tau(x)\le 0\}, \quad t \geq 1.
\]
Thus the set $S_t$ denotes the set of feasible actions with respect to the constraints revealed up to round $t.$
 At the beginning
of round \(t\), the learner plays \(x_t\). It then
observes the cost function \(f_t\) and the constraint function \(g_t\). The next action $x_{t+1}$ is obtained by taking a step along the negative gradient of $f_t$ and then projecting it onto the set $S_t.$

\begin{algorithm}[H]
\caption{Nested Projected-OGD for $\COCO$}
\label{alg:nested-projected-ogd}
\begin{algorithmic}[1]
\REQUIRE Initial point \(x_1\in \mathcal{X}\), step size sequence $\{\eta_t\}_{t\geq 1}$.
\STATE Set \(S_0\gets \mathcal{X}\).
\FOR{\(t=1,2,\ldots,T\)}
    \STATE Play \(x_t\)
    \STATE Observe \(f_t\) and \(g_t\).
    \STATE Refine the current estimate of the feasible set
    \[
        S_t
        \gets
        S_{t-1}\cap\{x\in X:g_t(x)\le 0\}.
    \]
    \STATE Take a step along the negative subgradient direction \(h_t\in\partial f_t(x_t)\) and project the result onto $S_t$
    \[
        x_{t+1}
        =
        \Pi_{S_t}\left(x_t-\eta_t h_t\right).
    \]
\ENDFOR
\end{algorithmic}
\end{algorithm}

The algorithm is online: the action at round $t,$ \(x_t\), depends only on losses and constraints
revealed before \(t\).  The new loss \(f_t\) and constraint \(g_t\) are
used only to compute \(x_{t+1}\), which is played from the next round onward.
\begin{remark}[Per-round Computational Cost]
Algorithm~\ref{alg:nested-projected-ogd} only needs one first-order query
\(h_t\in\partial f_t(x_t)\) and one Euclidean projection onto the accumulated
feasible set \(S_t\). For many structured constraints (e.g., half-spaces or $2$-norm balls), this operator can be implemented highly efficiently via standard active-set methods.

\end{remark}
\subsection{Bounding the Movement Cost for \textsf{NP-OGD}} \label{analysis}

We establish a general result for the iteration \eqref{pgd-generic} where the vector $e_t$ at each step $t \geq 1$ could be arbitrary (in particular, not necessarily related to the gradients of any function). We show that if a pure nested projection curve is additively perturbed by
vectors $\{e_t\}_{t \geq 1}$, then its total movement cost is bounded by the diameter of the set $\mathcal{X}$ plus the total perturbation size, up to a
multiplicative constant which depends only on the dimension $d$. Importantly, our upper bound for the movement cost does not explicitly depend on the horizon length $T.$ 
To build intuition, consider the unperturbed case where $e_t=0, \forall t.$ In this setting, the trajectory itself is self-contracted and Theorem \ref{self-contraction-thm} immediately yields a constant movement bound independent of $T$. Lemma \ref{lem:robust-nested-projection} shows that introducing perturbations increases the total movement by at most an additive term proportional to the cumulative perturbation magnitude.


\begin{lemma}[Bounding the Movement Cost]
\label{lem:robust-nested-projection}
Let
\[
        K_0\supseteq K_1\supseteq\cdots\supseteq K_{T}
\]
be a nested sequence of nonempty closed convex subsets of \(\mathbb R^d\), such that \(\text{diam}(K_0)=D\).  Let \(x_1\in K_0\), and suppose that the successive iterates are obtained as
\begin{eqnarray} \label{iters}
        x_{t+1}
        =
        \Pi_{K_t}(x_t - e_t),
        \qquad
        1 \leq t \leq T,
\end{eqnarray}
where $\big(e_t\in\mathbb R^d, 1\leq t \leq T\big)$ is an arbitrary sequence of vectors and $\Pi_{K_i}(\cdot)$ denotes the standard Euclidean projection operation onto the closed and convex set $K_i, 1\leq i\leq T$.  Then the cumulative movement of the successive iterates \eqref{iters} can be upper-bounded as
\begin{eqnarray} \label{movement-cost-ub}
        \sum_{t=1}^{T}\|x_{t+1}-x_t\|_2
        \le
        C_{d}
        \left(
            D+\sum_{t=1}^{T}\|e_t\|_2
        \right),
\end{eqnarray}
where the constant $C_{d} >0$ depends only on the dimension $d.$
\end{lemma}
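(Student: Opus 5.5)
The plan is to lift the perturbed trajectory \eqref{iters} to $\mathbb R^{d+1}$ by appending the cumulative perturbation as an extra coordinate, and to choose a norm on the lifted space in which the lifted polygonal line is exactly self-contracted in the sense of Definition~\ref{self-contraction-def}. Theorem~\ref{self-contraction-thm} then bounds its Euclidean length by a constant times the Euclidean distance between its endpoints, and that length dominates the movement cost.

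\textbf{Step 1 (one-step inequality).} Let $k\ge j\ge 2$. The sets are nested and $x_k\in K_{k-1}$, so $x_k\in K_{k-1}\subseteq K_{j-1}$. Since $x_j=\Pi_{K_{j-1}}(x_{j-1}-e_{j-1})$, the Pythagorean inequality of Theorem~\ref{proj-thm} gives
\[
\|x_k-x_j\|_2\le \|x_k-x_{j-1}+e_{j-1}\|_2\le \|x_k-x_{j-1}\|_2+\|e_{j-1}\|_2 .
\]
Iterating this from $j$ down to $i$ yields, for all $i\le j\le k$,
\[
\|x_k-x_j\|_2\le \|x_k-x_i\|_2+\sum_{\tau=i}^{j-1}\|e_\tau\|_2 .
\]

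\textbf{Step 2 (lifting).} Define $s_t:=\sum_{\tau<t}\|e_\tau\|_2$, which is nondecreasing in $t$. Set $A_t:=(x_t,s_t)\in\mathbb R^{d+1}$ for $t=1,\dots,T+1$, and equip $\mathbb R^{d+1}$ with the norm $N(x,s):=\|x\|_2+|s|$. For $i\le j\le k$, monotonicity of $s$ and Step 1 give
\[
N(A_k-A_j)=\|x_k-x_j\|_2+(s_k-s_j)\le \|x_k-x_i\|_2+(s_j-s_i)+(s_k-s_j)=N(A_k-A_i).
\]
Hence $(A_1,\dots,A_{T+1})$ is self-contracted with respect to $N$.

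\textbf{Step 3 (apply the theorem).} Theorem~\ref{self-contraction-thm} gives $\sum_t\|A_{t+1}-A_t\|_2\le C\|A_{T+1}-A_1\|_2$. Here $C$ depends only on the fixed norm $N$ and the dimension $d+1$, so it depends only on $d$. The left-hand side dominates $\sum_t\|x_{t+1}-x_t\|_2$. For the right-hand side, both $x_1$ and $x_{T+1}$ lie in $K_0$, so
\[
\|A_{T+1}-A_1\|_2\le \|x_{T+1}-x_1\|_2+s_{T+1}\le D+\sum_{t=1}^T\|e_t\|_2 .
\]
This is \eqref{movement-cost-ub}.

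The main obstacle is Step 2: finding a lifted norm under which the inexact contraction of Step 1 becomes an exact self-contraction. The additive $\ell_2$-plus-$|s|$ norm is the natural first choice, because the extra coordinate absorbs the perturbation terms telescopically. The only remaining point to check is that Theorem~\ref{self-contraction-thm} allows an arbitrary (non-Euclidean) norm while measuring length in $\|\cdot\|_2$, which is exactly the situation described in the remark following that theorem.
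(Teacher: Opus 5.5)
Your proposal is correct and follows the paper's proof essentially step for step: an approximate self-contraction inequality, a lift to $\mathbb{R}^{d+1}$ with the accumulated perturbation as the extra coordinate and the norm $\|x\|_2+|s|$, and then Theorem~\ref{self-contraction-thm}. The differences are cosmetic. You get the one-step inequality directly from the Pythagorean inequality at the perturbed point $x_{j-1}-e_{j-1}$, whereas the paper combines non-expansiveness with the Pythagorean inequality at $\Pi_{K_t}(x_t)$. You also use prefix sums $s_t$ instead of the paper's tail sums $R_t$; these differ only by a constant shift and a sign, so they give identical norm differences.
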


\paragraph{Proof outline:} Our high-level strategy is to make use of the finite length property of the self-contracted polygonal line (Theorem \ref{self-contraction-thm}). However, due to movement along the vectors $\{e_t\}_{t\geq 1},$ the iterates $\{x_t\}_{t\geq 1}$ no longer satisfy the self-contraction property \eqref{self-contr-prop}. Interestingly, we show that this issue can be mitigated by appropriately \emph{lifting} the iterates to a higher dimension. Specifically, the new iterates are self-contracted with respect to a particular norm $\lnorm \cdot \rnorm_{\oplus}$ defined later. The movement bound for the original iterates then follows from the movement bound of the lifted sequence. We give the detailed proof below.  


\begin{proof}
Fix any round $t \geq 1.$ By the non-expansiveness property of the Euclidean projection (Theorem \ref{proj-thm}), we have 
\begin{eqnarray} \label{perturb1}
        \|x_{t+1}-\Pi_{K_t}(x_t)\|_2
        =
        \left\|
        \Pi_{K_t}(x_t - e_t)-\Pi_{K_t}(x_t)
        \right\|_2
        \le
        \lnorm e_t \rnorm_2,	
\end{eqnarray}
Now fix any round \(k>t\).  Since $x_k\in K_{k-1}\subseteq K_t$, using Pythagorean theorem for Euclidean projection, we have 
\begin{eqnarray} \label{pythagorean1}
	        \|\Pi_{K_t}(x_t)-x_k\|_2 \le \|x_t-x_k\|_2.
\end{eqnarray}
Therefore,
\begin{eqnarray*}
        \|x_{t+1}-x_k\|_2 &=& \lnorm x_{t+1} - \Pi_{K_t}(x_t) + \Pi_{K_t}(x_t) - x_k \rnorm_2 \\
        &\stackrel{(a)}{\le}& 
        \|x_{t+1}-\Pi_{K_t}(x_t)\|_2+ \|\Pi_{K_t}(x_t)-x_k\|_2 \\
        &\stackrel{(b)}{\le}&
        \|x_t-x_k\|_2+\lnorm e_t\rnorm_2,
\end{eqnarray*}
where (a) follows from Triangle inequality for the Euclidean norms and (b) follows from Eqns \eqref{perturb1} and \eqref{pythagorean1}. Iterating this inequality, we have that for any \(i<j<k\) the following approximate self-contraction property:
\begin{eqnarray} \label{approx-contraction}
        \|x_j-x_k\|_2
        \le
        \|x_i-x_k\|_2
        +
        \sum_{r=i}^{j-1}\lnorm e_r \rnorm_2 .
\end{eqnarray}
The above inequality shows that the sequence $\{x_i\}_{i \geq 1}$ satisfies an \emph{approximate} version of the self-contraction property, given by Eqn.\ \eqref{self-contraction-def}, up to an additive perturbation. We now use a \emph{lifting} argument to lift the sequence to a higher dimension so that it \emph{exactly} satisfies the self-contraction property, and hence, Theorem \ref{self-contraction-thm} can be used (See Figure \ref{lifting}).  For this, define the tail perturbation sequence $\{R_t\}_{t=1}^{T}$ as 
\[
        R_t:=\sum_{r=t}^{T} \lnorm e_r \rnorm_2, \qquad 1\le t \le T, \quad R_{T+1}=0.
\]
 Next, we define a sequence of $d+1$ dimensional vectors $\{A_t\}_{t=1}^{T}$ by appropriately lifting the action sequence to \(\mathbb R^{d+1}\): 
 \[
        A_t:=(x_t,R_t),
        \qquad 1\leq t \leq T+1.
\]
Note that the above sequence $\{A_t\}_{t}$ is constructed in a non-causal fashion. However, this does not pose any issue as the sequence $\{A_t\}_{t}$ will be used only in the proof. 

Next, we equip the space \(\mathbb R^{d+1}\) with a non-standard norm $\lnorm \cdot \rnorm_{\oplus}$ defined below. Let $\bm{x} \equiv (x_1, x_2, \ldots, x_d, x_{d+1}) = (\bm{u}, x_{d+1})$ be an arbitrary vector in $\mathbb{R}^{d+1},$ where $\bm{u}\equiv(x_1, x_2, \ldots, x_d)$ is the $d$-dimensional vector comprising the first $d$ components of $\bm{x}.$ We now define the norm of the vector $\bm{x}=(\bm{u}, x_{d+1})$ as:
\[
        \|\bm{x}\|_{\oplus}:=\|\bm{u}\|_2+|x_{d+1}|.
\]
It can be easily checked that $\lnorm \cdot \rnorm_{\oplus}$ satisfies all three axioms, namely Positive Definiteness, Homogeneity, and Triangle inequality, required of a norm. Now, for any triple \(i<j<k\), we have 
\[
\begin{aligned}
        \|A_j-A_k\|_{\oplus}
        &=
        \|x_j-x_k\|_2+R_j-R_k  \\
        &\stackrel{(a)}{\le}
        \|x_i-x_k\|_2+(R_i-R_j)+(R_j-R_k) \\
        &=
        \|x_i-x_k\|_2+R_i-R_k \\
        &=
        \|A_i-A_k\|_{\oplus},
\end{aligned}
\]
where in step (a), we have used the approximate self-contraction property given by Eqn.\ \eqref{approx-contraction}. 
Using Definition \ref{self-contraction-def}, the above inequality implies that the sequence $(A_1, A_2,\ldots, A_{T+1})$ is a self-contracted polygonal line in $\mathbb{R}^{d+1}$ with respect to the norm $\lnorm \cdot \rnorm_{\oplus}$. Hence, using Theorem \ref{self-contraction-thm}, we conclude that 
\begin{eqnarray} \label{length-bd}
		 \sum_{t=1}^{T} \lnorm A_{t+1}-A_t \rnorm_2 \leq C_d \lnorm A_{T+1}-A_1 \rnorm_2
		 \end{eqnarray}
	for some $C_d>0,$ which depends only on the dimension $d$ and on the norm $\lnorm \cdot \rnorm_{\oplus}$. Since the norm $\lnorm \cdot \rnorm_{\oplus}$ is fixed throughout the analysis, we suppress this dependence. By definition, we have
\[
        \|A_{t+1}-A_t\|_{2}
        \ge
        \|x_{t+1}-x_t\|_2.
\]
Furthermore, we can bound 
\[ \lnorm A_{T+1}-A_1\rnorm_2^2 = ||x_{T+1}-x_1||_2^2 + (R_{T+1}-R_1)^2 \stackrel{(a)}{\leq} D^2 + R_1^2.\]
	where in (a), we have used the finite diameter of the set $K_0$. Combining the above two estimates with \eqref{length-bd}, we conclude that 
	\begin{eqnarray*}
		 \sum_{t=1}^{T} \|x_{t+1}-x_t\|_2 \leq C_d (D + \sum_{t=1}^{T} \lnorm e_t \rnorm_2),
	\end{eqnarray*}
	as claimed.
	\begin{figure}
	\centering	
	\includegraphics[scale=0.7]{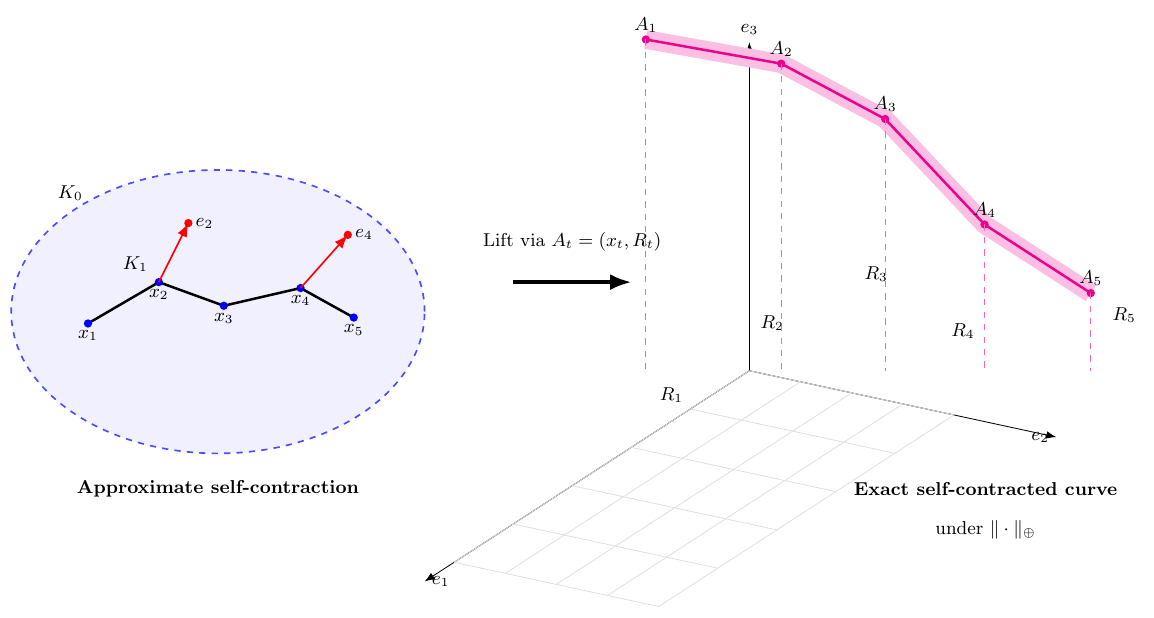} 
	\put(-350,75){\scriptsize{Original trajectory}}
		\put(-80,75){\scriptsize{Lifted trajectory}}
	\caption{\small{Illustration of exact Self-Contraction via the lifting argument}}
	\label{lifting}
	\end{figure}
%
%
\end{proof}

\subsection{$\Regret$ and $\CCV$ bounds for $\texttt{NP-OGD}$}
In this section, we bound the $\Regret$ and $\CCV$ of the $\textsf{NP-OGD}$ algorithm for both convex and strongly convex losses. While the proof of the $\Regret$ bound makes use of standard arguments, the proof of the $\CCV$ bound utilizes Lemma \ref{lem:robust-nested-projection}. 
\begin{lemma}[$\Regret$ bound]
\label{lem:nested-ogd-regret}
Suppose that each loss function \(f_t\) is \(H_t\)-strongly convex and \(G\)-Lipschitz on \(\mathcal{X}\),
where we set \(H_t=0\) for
convex yet not strongly convex losses.  Then the regret of Algorithm \ref{alg:nested-projected-ogd} can be bounded as:
\[
\begin{aligned}
      \mathsf{Regret}_T(x^*) := \sum_{t=1}^T f_t(x_t)
        -
        \sum_{t=1}^T f_t(x^\star)
        \le
        \frac{D^2}{2}
        \sum_{t=1}^T
        \bigl(\frac{1}{\eta_t}-\frac{1}{\eta_{t-1}}-H_t\bigr)_+
        +
        \frac{G^2}{2}
        \sum_{t=1}^T\eta_t, \quad \forall x^* \in S_T,
\end{aligned}
\]
 where we have defined $
\frac{1}{\eta_0} \equiv 0.$ 

\end{lemma}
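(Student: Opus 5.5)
The plan is to follow the standard online gradient descent regret analysis, using one observation: the comparator lies in every projection set. Fix $x^\star\in S_T$. Since the sets are nested, $x^\star\in S_T\subseteq S_t$ for every $t\le T$. Hence the Pythagorean inequality of Theorem \ref{proj-thm} applies to each projection step $x_{t+1}=\Pi_{S_t}(x_t-\eta_t h_t)$ with $x^\star$ as the reference point. This gives
\[
\|x_{t+1}-x^\star\|_2^2\le \|x_t-\eta_t h_t-x^\star\|_2^2=\|x_t-x^\star\|_2^2-2\eta_t\langle h_t,x_t-x^\star\rangle+\eta_t^2\|h_t\|_2^2 .
\]
Because $f_t$ is $G$-Lipschitz, we have $\|h_t\|_2\le G$. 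Rearranging yields
\[
\langle h_t,x_t-x^\star\rangle\le \frac{\|x_t-x^\star\|_2^2-\|x_{t+1}-x^\star\|_2^2}{2\eta_t}+\frac{\eta_t G^2}{2}.
\]
The shrinking feasible set is therefore invisible to the regret analysis. This is the only place where the nested structure enters.

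Next I will use $H_t$-strong convexity: $f_t(x_t)-f_t(x^\star)\le\langle h_t,x_t-x^\star\rangle-\frac{H_t}{2}\|x_t-x^\star\|_2^2$. Summing over $t$ and writing $\delta_t:=\|x_t-x^\star\|_2^2$, I will regroup the telescoping terms by summation by parts. With $1/\eta_0\equiv0$, this gives
\[
\sum_{t=1}^T\Big(\frac{\delta_t-\delta_{t+1}}{2\eta_t}-\frac{H_t}{2}\delta_t\Big)\le\sum_{t=1}^T\frac{\delta_t}{2}\Big(\frac1{\eta_t}-\frac1{\eta_{t-1}}-H_t\Big),
\]
where the final term $-\delta_{T+1}/(2\eta_T)\le 0$ has been dropped.

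Finally, $\delta_t\le D^2$ because both $x_t$ and $x^\star$ lie in $\mathcal X$. Each coefficient is at most its positive part, so $\delta_t(\cdot)\le D^2(\cdot)_+$. Adding $\frac{G^2}{2}\sum_t\eta_t$ gives the claimed bound.

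I expect no real obstacle. The only points needing care are the summation-by-parts bookkeeping, including the convention $1/\eta_0=0$, and the replacement of possibly negative coefficients by their positive parts so that the bound $\delta_t\le D^2$ can be applied.
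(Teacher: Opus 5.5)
Your proposal is correct and matches the paper's proof in every step: comparator in every $S_t$ by nestedness, one-step projection inequality, strong convexity, summation by parts with $1/\eta_0\equiv 0$, dropping $-\|x_{T+1}-x^\star\|_2^2/(2\eta_T)$, and bounding $\delta_t\le D^2$ after passing to positive parts. The only cosmetic difference is that you invoke the Pythagorean inequality where the paper cites non-expansiveness (using $\Pi_{S_t}(x^\star)=x^\star$); both follow from Theorem~\ref{proj-thm}.
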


The proof of Lemma \ref{lem:nested-ogd-regret} follows from an important yet minor variation of the standard arguments and is given in Appendix \ref{reg-bd-proof} for reference.
In our next result, we upper bound the $\CCV$ by the movement cost. 
\begin{lemma}[$\CCV$ bound]
\label{lem:nested-ogd-movement}
Assume each \(f_t\) is convex and \(G\)-Lipschitz and each \(g_t\) is quasi-convex and \(G\)-Lipschitz on $\mathcal X$.
Then the $\CCV$ of Algorithm~\ref{alg:nested-projected-ogd} can be bounded as:
\begin{eqnarray*}
       \CCV_T := \sum_{t=1}^T (g_t(x_t))^+
        \le  G\sum_{t=1}^T\|x_{t+1}-x_t\|_2 
        \leq
        GC_{d}
        \left(
            D+ G \sum_{t=1}^T \eta_t
        \right),
\end{eqnarray*}
where $C_d$ is a constant that depends only on the dimension $d.$
\end{lemma}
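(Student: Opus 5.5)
The plan has two steps: bound each instantaneous violation $(g_t(x_t))^+$ by $G\|x_{t+1}-x_t\|_2$, and then control the total movement with Lemma~\ref{lem:robust-nested-projection}, taking $e_t=\eta_t h_t$.

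\emph{Step 1 (per-round violation).} By construction $x_{t+1}=\Pi_{S_t}(x_t-\eta_t h_t)\in S_t$, so in particular $g_t(x_{t+1})\le 0$. If $g_t(x_t)\le 0$ there is nothing to prove. Otherwise, $G$-Lipschitzness of $g_t$ on $\mathcal X$ gives
\[
(g_t(x_t))^+ = g_t(x_t) \le g_t(x_t)-g_t(x_{t+1}) \le G\|x_t-x_{t+1}\|_2 .
\]
Both points lie in $\mathcal X$: $x_t\in S_{t-1}\subseteq\mathcal X$ for $t\ge2$, and $x_1\in\mathcal X$. Summing over $t$ gives the first inequality, $\CCV_T\le G\sum_t\|x_{t+1}-x_t\|_2$. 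Quasi-convexity is only needed so that each sublevel set $\{x:g_t(x)\le 0\}$ is convex. Since it is also closed by continuity, every $S_t$ is a closed convex set, and it is nonempty by Assumption~\ref{common-feasibility}.

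\emph{Step 2 (movement bound).} Set $K_t:=S_t$ for $0\le t\le T$, with $K_0=S_0=\mathcal X$ of diameter $D$, and $e_t:=\eta_t h_t$. The NP-OGD iterates are then exactly of the form \eqref{iters}, and $x_1\in K_0$. The sets $K_t$ are nested, closed, convex and nonempty, so Lemma~\ref{lem:robust-nested-projection} yields
\[
\sum_t\|x_{t+1}-x_t\|_2\le C_d\Bigl(D+\sum_t\eta_t\|h_t\|_2\Bigr).
\]
Since $f_t$ is $G$-Lipschitz, its subgradients satisfy $\|h_t\|_2\le G$. Substituting gives the second inequality.

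\emph{Main obstacle.} The only delicate point is justifying $\|h_t\|_2\le G$. A subgradient at a boundary point of $\mathcal X$ can have large norm if $f_t$ is only defined on $\mathcal X$. I would handle this with the standard convention, implicit in the regret analysis of Lemma~\ref{lem:nested-ogd-regret}, that $f_t$ is $G$-Lipschitz on a neighborhood of $\mathcal X$, or that the subgradient oracle returns one of norm at most $G$; such a subgradient always exists for a $G$-Lipschitz convex function. Everything else is a direct substitution into Lemma~\ref{lem:robust-nested-projection}.
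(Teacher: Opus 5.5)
Your proposal is correct and matches the paper's proof: you apply Lemma~\ref{lem:robust-nested-projection} with $K_t=S_t$ and $e_t=\eta_t h_t$, using $\|e_t\|_2\le \eta_t G$, and you bound each violation through $g_t(x_{t+1})\le 0$ and Lipschitzness of $g_t$. Your additional remarks (closedness and nonemptiness of $S_t$, and choosing a subgradient of norm at most $G$) only make explicit points the paper leaves implicit.
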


\begin{proof}
Recall that the update of Algorithm~\ref{alg:nested-projected-ogd} is exactly
\[
        x_{t+1}
        =
        \Pi_{K_t}(x_t - e_t).
\]
with $e_t=\eta_t h_t,$ where $h_t \in \partial f_t(x_t).$ 
Thus $        \lnorm e_t\rnorm_2 \le \eta_t G,$ where we have used the $G$-Lipschitzness of the cost functions.
Hence, applying Lemma~\ref{lem:robust-nested-projection} with $K_t=S_t, t \geq 1,$ we obtain the following movement cost bound:
\begin{eqnarray} \label{movement-cost-bound}
        \sum_{t=1}^T\|x_{t+1}-x_t\|_2
        \le
        C_{d}
        \left(
            D+ G \sum_{t=1}^T \eta_t
        \right).
\end{eqnarray}

%
%
Finally,  by construction,
\[
        x_{t+1}\in S_t\subseteq\{x:g_t(x)\le0\}.
\]
Therefore,
\[
        g_t(x_{t+1})\le0.
\]
Since \(g_t\) is \(G\)-Lipschitz, the incremental violation may be bounded as:
\[
        g_t(x_t)
        \le
        g_t(x_{t+1})+G\|x_t-x_{t+1}\|_2
        \le
        G\|x_t-x_{t+1}\|_2.
\]
Summing over \(1\leq t\leq T\) yields
\begin{eqnarray*}
\CCV_T
        =
        \sum_{t=1}^T(g_t(x_t))^+
        \le
        G\sum_{t=1}^T\|x_{t+1}-x_t\|_2  
        \stackrel{\text{Eq.} \eqref{movement-cost-bound}}{\le} GC_{d}
        \left(
            D+ G \sum_{t=1}^T \eta_t
        \right).
\end{eqnarray*}
\end{proof}
We now state our main result of the paper by considering specific step size sequences suitable for different classes of loss functions. 

\begin{theorem}[Performance bounds for \texttt{NP-OGD}] \label{thm:main}
1. Assume each loss function \(f_t\) to be convex and \(G\)-Lipschitz. If Algorithm \ref{alg:nested-projected-ogd} is run with the step size sequence $\eta_t= \frac{D}{G \sqrt{t}}, t\geq 1,$ it achieves the following $\Reg$ and $\CCV$ bounds:

\[
    \Reg_T \le \frac32GD\sqrt T, \quad \CCV_T \le C_{d}G \left(D+2D\sqrt T \right).
\]

2. Assume each loss function \(f_t\) to be \(\mu\)-strongly convex and \(G\)-Lipschitz. If Algorithm \ref{alg:nested-projected-ogd} is run with the step size sequence $\eta_t= \frac{1}{\mu t}, t\geq 1,$ it achieves the following $\Reg$ and $\CCV$ bounds:
\begin{eqnarray*}
    \Reg_T \le \frac{G^2}{2\mu}(1+\log T), \quad 
    \CCV_T \le C_{d}G \left( D+\frac{G}{\mu}(1+\log T) \right).
\end{eqnarray*}
In the above, $C_d$ is a constant that depends only on the dimension $d.$
\end{theorem}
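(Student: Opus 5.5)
The theorem follows by substituting the two step-size schedules into Lemma~\ref{lem:nested-ogd-regret} and Lemma~\ref{lem:nested-ogd-movement}. The only work is to evaluate the two sums
\[
\sum_{t=1}^T\Bigl(\tfrac{1}{\eta_t}-\tfrac{1}{\eta_{t-1}}-H_t\Bigr)_+ \qquad\text{and}\qquad \sum_{t=1}^T \eta_t
\]
for each schedule. Regret is measured against $\min_{x\in S_T}\sum_t f_t(x)$, and Lemma~\ref{lem:nested-ogd-regret} holds for every $x^\star\in S_T$. Since $S_T$ is closed and nonempty by Assumption~\ref{common-feasibility}, we may take the minimizer.

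\textbf{Convex case.} Take $H_t=0$ and $\eta_t=D/(G\sqrt t)$.

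\emph{Regret.} The sequence $1/\eta_t$ is increasing, so the positive parts telescope:
\[
\frac{D^2}{2}\sum_{t=1}^T\Bigl(\tfrac{1}{\eta_t}-\tfrac{1}{\eta_{t-1}}\Bigr)_+=\frac{D^2}{2\eta_T}=\frac{GD\sqrt T}{2}.
\]
For the second term, $\sum_{t\le T} t^{-1/2}\le 2\sqrt T$, so $\frac{G^2}{2}\sum_t\eta_t\le GD\sqrt T$. Adding the two gives $\Reg_T\le \frac32 GD\sqrt T$.

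\emph{CCV.} Lemma~\ref{lem:nested-ogd-movement} gives $\CCV_T\le GC_d(D+G\sum_t\eta_t)$. Using $G\sum_t\eta_t\le 2D\sqrt T$, we get $\CCV_T\le C_dG(D+2D\sqrt T)$.

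\textbf{Strongly convex case.} Take $H_t=\mu$ and $\eta_t=1/(\mu t)$, with $1/\eta_0=0$.

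\emph{Regret.} For every $t$, $\tfrac{1}{\eta_t}-\tfrac{1}{\eta_{t-1}}-H_t=\mu t-\mu(t-1)-\mu=0$; at $t=1$ this reads $\mu-0-\mu=0$. So the first sum vanishes. The second term is $\frac{G^2}{2\mu}\sum_{t\le T}\frac1t\le \frac{G^2}{2\mu}(1+\log T)$, which is the claimed regret bound.

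\emph{CCV.} Here $G\sum_t\eta_t\le \frac{G}{\mu}(1+\log T)$, and Lemma~\ref{lem:nested-ogd-movement} gives $\CCV_T\le C_dG\bigl(D+\frac{G}{\mu}(1+\log T)\bigr)$.

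\textbf{Checking hypotheses.} There is no real obstacle; the one point to verify is that both lemmas apply. Lemma~\ref{lem:nested-ogd-movement} needs $\|\eta_t h_t\|_2\le \eta_t G$, which holds because $f_t$ is $G$-Lipschitz. Lemma~\ref{lem:robust-nested-projection} needs $K_0=\mathcal X$ to have diameter $D$ and each $S_t$ to be nonempty. Nonemptiness holds since $S_t\supseteq S_T\neq\emptyset$.
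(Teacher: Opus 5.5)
Your proposal is correct and follows the paper's route exactly. The paper simply asserts that the theorem follows by substituting the two step-size schedules into Lemma~\ref{lem:nested-ogd-regret} and Lemma~\ref{lem:nested-ogd-movement}, and your evaluation of $\sum_t(1/\eta_t-1/\eta_{t-1}-H_t)_+$ and $\sum_t\eta_t$ supplies the omitted arithmetic. One small nit: you do not need a minimizer over $S_T$, since the lemma's bound is uniform in $x^\star\in S_T$; if you want one anyway, it exists because $S_T$ is compact (closed and bounded), not merely closed.
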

It is well-known that the above regret bounds are minimax optimal even in the unconstrained setting \cite{hazan2022introduction}. The proof of Theorem \ref{thm:main} follows immediately from Lemma \ref{lem:nested-ogd-regret} and \ref{lem:nested-ogd-movement}
upon substituting the respective expressions for the step size sequence. 

\subsection{An Application: Logarithmic Guarantees for Strongly Convex Constraints} \label{structure-section}
In our analysis so far, we have only used the quasi-convexity of the constraint functions, which only requires the sub-level sets of the constraint functions to be convex. In particular, we have not exploited any detailed convex-analytic properties of the constraint functions.
As an application of Theorem \ref{thm:main}, we show that when the constraint functions have a strictly positive curvature, the $\Regret$ and $\CCV$ guarantees can be improved beyond the baseline $O(\sqrt{T})$ bounds while assuming only the convexity for the loss functions and suitably restricting the class of adversaries. Recall that the constraint function at round $t$ is denoted by $g_t$ which encodes a constraint set $\{x: g_t(x) \leq 0\}.$ We make the following two assumptions.

\begin{assumption}[Strongly convex constraints] \label{assump-norm}
We assume that each constraint function $g_t$ is $\mu$-strongly convex and $G$-Lipschitz for some constants $\mu, G > 0.$ 
\end{assumption}
Recall that the $\CCV$ definition given by Eqn.\ \eqref{reg-ccv-def}, considers only non-negative violations $g_t(x_t)_+$ at every round. However, to exploit the strong convexity of the constraint functions, we now define the constraint violation at round $t$ to be $g_t(x_t).$ Hence, the violation at any round could now be either positive or negative, depending on whether the current action $x_t$ satisfies the online  constraint or not. Thus, specific to this section, we work with the following weaker definition of $\CCV,$ called \textsf{Signed Cumulative Violation}: 
\begin{eqnarray} \label{ccv-def-new}
\mathsf{SCV}_T = \sum_{t=1}^T g_t(x_t).
\end{eqnarray}
 We keep the regret definition the same as Eqn.\ \eqref{reg-ccv-def}. Hence, unlike before, positive violations at some rounds can be (partly) compensated by negative violations at some other rounds. Definition \eqref{ccv-def-new} is applicable to the case where one is only interested in bounding the long-term $\CCV$ over the entire horizon of length $T.$ We now make the following assumption on the adversary. 
\begin{assumption}[Non-Negative $\mathsf{SCV}$] \label{assump-non-neg-CCV}
	The adversary ensures that the cumulative constraint violation at the end of the horizon is non-negative, \emph{i.e.,} $\mathsf{SCV}_T = \sum_{t=1}^T g_t(x_t) \geq 0.$
\end{assumption}
Assumption \ref{assump-non-neg-CCV} is satisfied, \emph{e.g.,} by an 
\emph{Agile Adversary} that chooses the constraints adaptively so that the learner always incurs a non-negative violation at each round, \emph{i.e.}, $g_t(x_t) \geq 0, \forall t.$

We now consider the $\COCO$problem with convex cost functions such that the online constraints satisfy Assumption \ref{assump-norm} and Assumption \ref{assump-non-neg-CCV}. In Algorithm \ref{alg:structured-COCO} described below, we invoke $\texttt{NP-OGD}$ with a modified cost function that achieves $O(\log T)$ $\Regret$ and $O(\log T)$ $\mathsf{SCV}$.

\begin{algorithm}[H]
\caption{$\mathsf{COCO}$ with Strongly Convex Constraints}
\label{alg:structured-COCO}
\begin{algorithmic}[1]
\FOR{\(t=1,2,\ldots,T\)}
\STATE Play $x_t$ using Algorithm \ref{alg:nested-projected-ogd}
\STATE Observe the cost function $f_t$ and the constraint function $g_t.$
\STATE Compute surrogate cost $\hat{f}_t(x) = f_t(x)+g_t(x).$
\STATE Pass the surrogate cost $\hat{f}_t$ and the constraint $g_t$ to  Algorithm \ref{alg:nested-projected-ogd}. 
   \ENDFOR
\end{algorithmic}
\end{algorithm}

\subsection{Analysis of Algorithm \ref{alg:structured-COCO}}
By Assumption \ref{assump-norm}, the surrogate cost functions are $\mu$-strongly convex and $2G$-Lipschitz. Hence, using Theorem \ref{thm:main}, part (2), we conclude that for any feasible comparator $x^* \in \mathcal{X}^*,$ we have 
\begin{eqnarray*}
	\Regret_T^{\hat{f}}(x^*) = O(\log T), ~~~ \CCV_T = O(\log T). 
\end{eqnarray*}
Now note that 
\begin{eqnarray} \label{reg-decomp}
	\Regret_T^{\hat{f}}(x^\star) = \Regret_T^{f}(x^*) + \underbrace{\sum_{t=1}^T \big(g_t(x_t) - g_t(x^*)\big)}_{(A)}. 
\end{eqnarray}
By Assumption \ref{assump-non-neg-CCV}, we have that $ \sum_{t=1}^T g_t(x_t) \geq 0.$ Also, from the feasibility property of the benchmark, we have $g_t(x^*) \leq 0, \forall t \geq 1.$ Thus term (A) is non-negative. Hence, from Eqn.\ \eqref{reg-decomp}, we conclude that Algorithm \ref{alg:structured-COCO} yields 
\begin{eqnarray*}
	\Regret_T(x^*) \leq \Regret_T^{\hat{f}}(x^*) = O(\log T), ~~ \textrm{and} ~~~ \mathsf{SCV}_T = O(\log T).
\end{eqnarray*}
\paragraph{Discussion:} At the outset, it might seem surprising that Algorithm \ref{alg:structured-COCO} achieves $O(\log T)$ regret for convex cost functions, apparently violating the well-known $\Omega(\sqrt{T})$ regret lower bound \cite{hazan2022introduction}. To understand why this faster rate is possible, note that the regret benchmark $x^\star$ is not arbitrary - it is restricted to be a point that satisfies all constraints. The agility of the adversary ensures that the constraints are enforced in the long-term. Without Assumption \ref{assump-non-neg-CCV}, the adversary is free to produce the same constraint at every round - effectively reducing the problem to an unconstrained OCO, for which regret is indeed lower bounded by $\Omega(\sqrt{T}).$

\subsection{Lower bounds for $\COCO$with Strongly Convex Loss} \label{lower_bound_section}

Since the $\COCO$problem is a generalization of the classical OCO problem (this follows immediately upon choosing the constraint function $g_t =0, \forall t \geq 1$), the minimax $\Regret$ for $\COCO$is naturally lower bounded by that of the standard unconstrained OCO problem (\emph{i.e.,} $\Regret \geq \Omega(\log T)$) \cite{hazan2022introduction}. This motivates us to formulate the converse problem as lower bounding the $\CCV$ for all online policies that satisfy a sublinear regret upper bound of $R(T)$ against all feasible actions\footnote{By sublinear, we mean $R(T) \leq c T^{1-\delta}$ for some constants $c >0$ and $ 0< \delta \leq 1.$}.  Formally, for a given sublinear regret budget \(R(T)\), we define the minimax $\CCV$ value as 
\begin{align}
\label{reg-bud-lb}
        V(R(T),T)
        :=
         \inf_{\mathcal A:\sup_I \Reg_T(\mathcal A, I)\le R(T)} \sup_I
         \CCV_T(\mathcal A,I),
\end{align}
where $I$ is an arbitrary problem instance and the outer infimum is taken over all online policies $\mathcal{A}$ satisfying the regret budget $R(T)$ over all valid instances. In this section, our goal is to establish a lower bound for $V(R(T), T)$.

\begin{theorem}[$\CCV$ lower bound]
\label{thm:cone-cap-lower-bound}
Consider the above setup. Fix any online $\COCO$algorithm achieving a sublinear regret of \(R(T) = O(T^{1-\delta})\) for some fixed $0< \delta \leq 1.$ Then there exists a $\COCO$problem instance on the $d$-dimensional unit ball with $\nicefrac{1}{2}$-strongly convex $\nicefrac{3}{2}$-Lipschitz losses and $1$-Lipschitz affine constraints for which, the minimax $\CCV$, defined in Eqn.\ \eqref{reg-bud-lb}, is lower bounded as: \[V(R(T), T) \ge\Omega\bigg(\frac{(\ln T)^{\frac{d-1}{d+1}}}{\ln \ln T}\bigg).\]
%
\end{theorem}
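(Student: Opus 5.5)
We will build an adversarial instance made of phases whose lengths grow geometrically, following the template outlined in the introduction. The decision set is the unit ball $B^d$. In phase $j$ the loss is held fixed at $f_t(x)=\tfrac14\|x-z_j\|_2^2+c$, which is $\tfrac12$-strongly convex and, on the ball, $\tfrac32$-Lipschitz once the anchor satisfies $\|z_j\|\le 1$. Every constraint is a half-space cut $g(x)=\langle a,x\rangle-b$ with $\|a\|_2=1$. Phase $j$ lasts $L_j=\lambda^j$ rounds for a suitable $\lambda>1$. This gives $M\asymp \log T/\log\lambda$ phases before the horizon $T$ runs out.

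The no-regret hypothesis is used as a localization tool. The anchor $z_j$ lies in the current feasible set $S$. If at least half of the $L_j$ rounds of phase $j$ have $\|x_t-z_j\|\ge \epsilon_j$, then two things follow. First, comparing against the feasible point $z_j$ across phase $j$ forces a regret of at least $\Omega(L_j\epsilon_j^2)$. Second, earlier phases contribute at most $O(\sum_{i<j}L_i)=O(L_j/(\lambda-1))$ regret, and this cannot cancel the phase-$j$ term. To see this, we make the comparator later on: we let the adversary repeat $z_j$ long enough, or we restrict to instances where $z_j$ remains feasible forever. Making this robust is where the "sublinear regret for every instance" assumption enters, since the adversary can stop the game at the end of any phase. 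Hence $L_j\epsilon_j^2\lesssim R(T)\approx T^{1-\delta}$. Choosing $L_j$ so that $L_j\epsilon^2\gg T^{1-\delta}$ on the last $\Theta(\log T)$ phases forces the learner within distance $\epsilon$ of $z_j$ for many rounds. The cleaner route is adaptive. The adversary waits until the learner first comes within $\epsilon$ of $z_j$; this must happen within the phase, or regret is violated. At that round it issues the cut $\{x:\langle a_j,x\rangle\le \langle a_j,z_j\rangle-2\epsilon\}$. The learner's point violates it by at least $\epsilon$, so each phase contributes $\Omega(\epsilon)$ to $\CCV$. The remaining rounds of the phase keep the same cut, or a vacuous constraint.

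The geometric core is a layered sphere-packing. It lets us choose $M$ anchors and cuts so that each new anchor $z_j$ stays feasible for all previous cuts, and also for all future cuts, so that the final common feasible set is nonempty (Assumption \ref{common-feasibility}). Each cut must remove only a cap of depth $\approx\epsilon$ around its anchor. A cap of depth $h$ on the unit sphere has angular radius $\sim\sqrt h$, so at a fixed radius one can pack about $h^{-(d-1)/2}$ disjoint caps. To reuse depth, we arrange the anchors in layers at radii $1>r_1>r_2>\cdots$. Within a layer, the disjoint caps at the current radius are cut one after another. Moving to the next layer costs radial depth, and $\epsilon$ must be at least comparable to the cap depth $h$. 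We will optimize the number of layers $m$ and the depth per layer against the budget $M\asymp\log T/\log\lambda$ of phases. Total $\CCV$ is $\sum(\text{phases})\times\epsilon$, where the number of phases is capped both by $M$ and by the packing count $m\,h^{-(d-1)/2}$, with $m h\lesssim 1$. Balancing gives $\epsilon\sim M^{-2/(d+1)}$ and therefore $\CCV\gtrsim M\epsilon\sim M^{(d-1)/(d+1)}$. The $1/\ln\ln T$ loss comes from taking $\lambda$, or the localization slack, as $\mathrm{polylog}$, so that $M\asymp \ln T/\ln\ln T$.

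We expect the main obstacle to be the localization step under a regret budget that is merely sublinear. We must guarantee that the learner approaches each anchor within $\epsilon$ during its phase, even though earlier phases could have produced negative regret that subsidizes later deviation. We plan to handle this by comparing the learner, on each phase separately, against the anchor $z_j$. This is legitimate because $z_j$ remains feasible for the entire horizon. We then argue by contradiction: stopping the game after phase $j$ gives an instance on which the regret exceeds $R(T)$. Phase lengths must be long enough that $L_j\epsilon^2$ dominates the cumulative slack. This is what forces the geometric growth and produces the $\ln\ln T$ factor.
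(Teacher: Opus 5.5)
Your plan is correct and is essentially the paper's proof: geometrically growing phases in which a strongly convex loss attracts the learner to a currently feasible anchor, a unit-normal half-space cut issued adaptively as soon as the learner enters a neighbourhood of that anchor, a layer-by-layer sphere packing whose layer spacing matches the cut depth (the paper's Claim~\ref{feas-claim} with $\Delta=2\sqrt{\varepsilon}$; note that $z_j$ only needs to survive \emph{earlier} cuts, since its own cut removes it and common feasibility comes from the origin), and a counterfactual instance stopped after phase $j$ in which the earlier rounds' negative regret, of magnitude $O(P_j)$, is absorbed by the phase-length recursion. The differences are cosmetic---your loss $\tfrac14\|x-z_j\|_2^2$ gives per-round regret $\epsilon^2/4$ rather than the paper's $\varepsilon/4$ from $\tfrac14\|x\|_2^2-\langle u_i,x\rangle$, so the growth factor becomes $O(\epsilon^{-2})$ instead of $13/\varepsilon$ while its logarithm is still $\Theta(\ln\ln T)$---and your balancing of the packing count $\epsilon^{-(d+1)/2}$ against the phase budget actually gives $(\ln T/\ln\ln T)^{(d-1)/(d+1)}$, slightly stronger than the bound the paper obtains from its choice $\varepsilon^{-1}=(\ln T)^{2/(d+1)}$.
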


In order to establish the above result, given any $\COCO$algorithm with a regret bound of $R(T)$, we construct a sequence of cost and constraint functions and lower bound the resulting $\CCV.$ 

The construction involves a maximal $\Delta$-packing of the unit sphere $S^{d-1}.$ Specifically, let $U = \{u_1, u_2, \ldots, u_N\}$ be a set of $d$-dimensional unit vectors satisfying 
\[ \lnorm u_i - u_l \rnorm_2 \ge \Delta, ~~ \forall i \neq l.\]
By the standard packing bound on the unit sphere \cite[Section V]{shannon1959probability}, for every sufficiently small
\(\Delta>0\), there exists a set $U$ with 
\[
        |U|= N\ge c_d\Delta^{-(d-1)},
\]
where the constant \(c_d>0\) depends only on the dimension \(d\). 

\paragraph{Construction of the adversarial instance:} We first specify the constraint functions used in our construction. Each constraint function is indexed by a unit vector $u_i \in U$ and a \emph{depth} parameter $k$ where $k \in [L]$ with  $L:=\left\lfloor\frac{1}{4\varepsilon}\right\rfloor$ for some $0 < \varepsilon < 1.$ Specifically, the $j$\textsuperscript{th} constraint function corresponding to the tuple $(k, u_i) \equiv j$ is given by: 
\begin{eqnarray} \label{constr-fun}
g(x) \equiv \langle u_i, x \rangle - (1-k\varepsilon).	
\end{eqnarray}
 This constraint function is affine, $1$-Lipschitz and corresponds to the following constraint
\begin{eqnarray} \label{constr-set}
	 \langle u_i, x \rangle \leq 1 - k\varepsilon, ~~ x \in \mathcal{X}. 
\end{eqnarray}

These constraints appear in \emph{lexicographic order} in the tuple $(k,i)$ at some point in time in the horizon, to be specified below. To elaborate, first, all depth-$1$ constraints ($k=1$) appear (there are exactly $N$ of them) and then all $N$ depth-$2$ constraints ($k=2$) appear etc. The successive constraints result in the following shrinking sequence of feasible sets 
\[\mathcal{X} \equiv K_0 \supseteq K_1 \supseteq K_{M+1},\]
where $K_j = K_{j-1} \cap \{ x \in \mathcal{X}: \langle u_i, x \rangle \leq 1 - k\varepsilon \},$ where $j= (k, u_i),$ $M = NL \geq c_d' \frac{\Delta^{-(d-1)}}{\varepsilon},$ is the total number of constraints where $c_d'$ is a constant depending only on the dimension $d.$ We now make the following claim.

\begin{figure*}
\centering
	\includegraphics[scale=0.9]{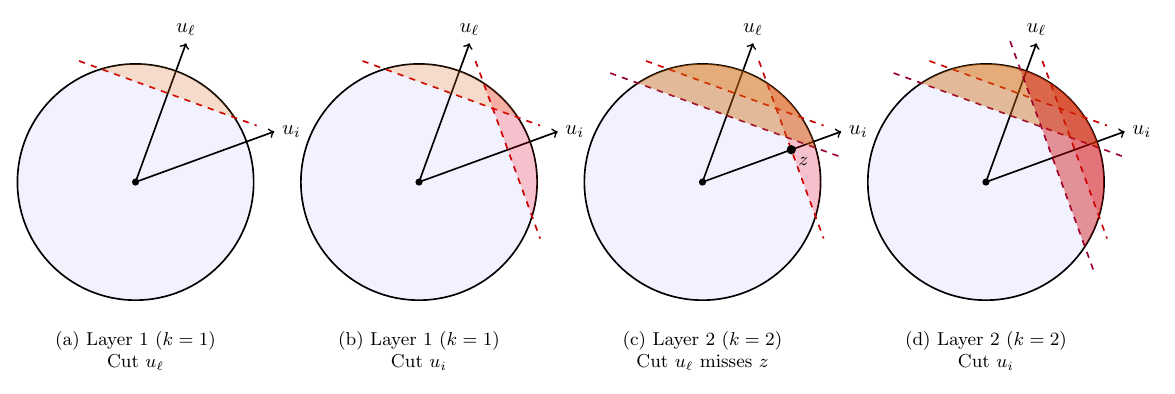}
\caption{\small{Lexicographic layer progression. Notice in (c) how the the deeper cut at $u_\ell$ does not exclude the tip $z$ along the direction $u_i$.}}
\label{fig:layered_cuts}
\end{figure*}

\begin{claim} \label{feas-claim}
With $\Delta = 2 \sqrt{\varepsilon},$ we have $z_j:=(1-(k-1)\varepsilon)u_i \in K_{j-1}.$ 
\end{claim}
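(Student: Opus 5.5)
We need to show that $z_j = (1-(k-1)\varepsilon)u_i$ satisfies every constraint revealed before the $j$-th one, i.e., every constraint indexed by $(k',u_l)$ that precedes $(k,u_i)$ in lexicographic order, together with $z_j\in\mathcal X$. Membership in the unit ball is immediate since $\|z_j\|_2 = 1-(k-1)\varepsilon \le 1$. By the lexicographic ordering, the earlier constraints split into two groups: (i) all constraints of depth $k'\le k-1$, for every direction $u_l\in U$ (including $l=i$); and (ii) depth-$k$ constraints with directions $u_l$, $l<i$, and hence $l\neq i$. Since the constraints of depth $k'\le k-1$ are weaker than the depth-$(k-1)$ ones, it suffices to check that $\langle u_l, z_j\rangle \le 1-(k-1)\varepsilon$ for all $l$ and that $\langle u_l, z_j\rangle \le 1-k\varepsilon$ for all $l\neq i$.

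The first group is handled by Cauchy--Schwarz: $\langle u_l, z_j\rangle \le \|z_j\|_2 = 1-(k-1)\varepsilon$, with equality exactly when $l=i$. So the previous-depth cut along $u_i$ itself is tight but not violated.

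The second group is the only real content and uses the packing separation. For $l\neq i$ we have $\|u_i-u_l\|_2\ge\Delta$. Since both vectors are unit vectors, this gives $\langle u_i,u_l\rangle = 1-\tfrac12\|u_i-u_l\|_2^2 \le 1-\Delta^2/2 = 1-2\varepsilon$ when $\Delta=2\sqrt\varepsilon$. We then need
\[
(1-(k-1)\varepsilon)\langle u_i,u_l\rangle \le 1-k\varepsilon .
\]
If $\langle u_i,u_l\rangle\le 0$, the left side is nonpositive, while the right side is at least $1-L\varepsilon\ge 3/4>0$ because $k\le L\le 1/(4\varepsilon)$. Otherwise, the left side is at most
\[
(1-(k-1)\varepsilon)(1-2\varepsilon) = 1-(k+1)\varepsilon + 2(k-1)\varepsilon^2 .
\]
This is at most $1-k\varepsilon$ exactly when $2(k-1)\varepsilon \le 1$, which holds since $k\varepsilon\le 1/4$.

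The main (though mild) obstacle is precisely this last inequality. The separation $\Delta$ must buy enough angular slack, here $2\varepsilon$ in the inner product, to absorb one extra $\varepsilon$ of depth. That slack gets slightly eroded by the shrinking radius factor $1-(k-1)\varepsilon$, and this erosion is where the restriction $L\le 1/(4\varepsilon)$ on the number of layers enters. Combining both groups gives $z_j\in K_{j-1}$.
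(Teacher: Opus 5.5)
Your proof is correct and follows the paper's argument. In both, the key step is that $\Delta=2\sqrt{\varepsilon}$ gives $\langle u_i,u_l\rangle\le 1-2\varepsilon$, so $(1-(k-1)\varepsilon)(1-2\varepsilon)=1-k\varepsilon-\varepsilon\bigl(1-2(k-1)\varepsilon\bigr)\le 1-k\varepsilon$ because $(k-1)\varepsilon\le 1/4$. The differences are cosmetic: the paper applies the packing bound to every earlier cut with $u_l\neq u_i$, whereas you handle the shallower cuts by Cauchy--Schwarz, and your sign case split is unnecessary since $1-(k-1)\varepsilon>0$ lets you multiply the inner-product bound directly.
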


\begin{proof}
Let us index $j = (k,u_i).$ 
Clearly, the point $z$ satisfies all previous constraints in the direction of $u_i$ as 
\[\langle u_i, z_j \rangle = 1 - (k-1)\varepsilon \leq 1-p \varepsilon, ~\forall p<k.  \]

Now consider a previous constraint with index $(p, u_l)$ with $u_l \neq u_i$ and $p\leq k.$ 
Since
\[
        \|u_i-u_\ell\|_2^2
        =
        2(1-\langle u_i,u_\ell\rangle),
\]
the \(\Delta\)-separation of the set $U$ gives
\[
        \langle u_i,u_\ell\rangle
        \le
        1-\frac{\Delta^2}{2}.
\]
Moreover, because \(k\le L = \lfloor \frac{1}{4\varepsilon} \rfloor\), we have $(k-1)\varepsilon\le \frac{1}{4}.$
Finally, choosing $\Delta=2\sqrt{\varepsilon},$
we have
\[
\begin{aligned}
        \langle u_\ell,z_j\rangle
        =
        (1-(k-1)\varepsilon)\langle u_\ell,u_i\rangle
        &\le
        (1-(k-1)\varepsilon)
        \left(1-\frac{\Delta^2}{2}\right)\\
        &\le
        (1-(k-1)\varepsilon)(1- 2\varepsilon)\\
        &= 1 - k\varepsilon - \varepsilon(1 - 2 (k-1)\varepsilon) \\
        &\le 1 - k\varepsilon 
        \le 1 - p \varepsilon.
\end{aligned}
\]
Thus the point \(z_j\) is feasible for all previous constraints.  

\end{proof}

\paragraph{A high-level strategy for constructing and scheduling the cost and the constraint functions}

We divide the entire time horizon $T$ into multiple phases, where phase $j$ corresponds to a constraint function $j = (k, u_i)$ given by Eqn.\ \eqref{constr-fun}. 
 At phase \(j \geq 1\), the adversary makes a candidate action 
\(z_j:=(1-(k-1)\varepsilon)u_i \) attractive to the learner by choosing the $\nicefrac{1}{2}$-strongly convex cost function defined in Eqn.\ \eqref{loss-phase-j}
and zero constraint function throughout the phase except the last round of phase $j$. Claim \ref{feas-claim} shows that the action $z_j$ remains feasible until the end of phase $j.$ If the learner avoids playing any actions from the neighborhood of $z_j$ throughout, the
adversary could simply continue the phase, causing the learner to incur large regret.  Therefore a no-regret learner, required by definition \ref{reg-bud-lb}, must play an action from the neighborhood of $z_j$. Immediately after the learner plays an action from the neighborhood of $z_j$, the adversary produces a half-space constraint \eqref{constr-fun} which makes the action $z_j$ infeasible as 
\[\langle u_i, z_j\rangle = 1- (k-1)\varepsilon > 1-k \varepsilon. \]
 Consequently, the learner incurs a violation of at least $\varepsilon$ and phase $j$ ends.
Finally, the total $\CCV$ is obtained by summing up the contributions from all $M \geq \Omega(\varepsilon^{-\nicefrac{1}{2}(1+d)})$ phases. In the following, we make the above construction explicit.

%
%
%
%
%
%
%

\begin{proof}
%
%

Define $m = \lfloor\min \big( c_d' \varepsilon^{-(\frac{d+1}{2})}, \frac{\log \frac{T}{R(T)+1}}{\log (13/\varepsilon)}\big)\rfloor,$ where $c_d'$ is some fixed dimension-dependent constant. 
We now construct a problem instance with \(m\) phases.

 Define the trigger set for the $j$\textsuperscript{th} phase:
\begin{eqnarray} \label{t-cap}
        \mathcal T_j
        :=
        \{x\in \mathcal{X}:\langle u_i,x\rangle\ge 1-(k-\nicefrac12)\varepsilon\}.
\end{eqnarray}
  Phase $j$ begins at round $P_j+1$ (we set  
        $P_1:=0$) and continues for $L_j$ rounds unless it ends prematurely due to the learner playing an action from the trigger set $\mathcal{T}_j.$ Thus $P_{j+1} \leq P_j + L_j.$ During phase
\(j\),
the adversary reveals the strongly convex loss function
\begin{eqnarray} \label{loss-phase-j}
        f_t^{(j)}(x)
        :=
        \frac{1}{4}\|x\|_2^2
        -
        \langle u_i,x\rangle,
\end{eqnarray}
and set the constraint \(g_t\) trivially to zero until the last round of phase $j$. The loss function \(f_t^{(j)}\) is clearly \(\nicefrac{1}{2}\)-strongly convex. Furthermore, its Lipschitz constant on the standard unit ball can be bounded as 
\[
        \|\nabla f_t^{(j)}(x)\|_2
        =
        \left\|x/2-u_i\right\|_2
        \le \lnorm x \rnorm_2/2 + \lnorm u_i \rnorm_2 \leq \nicefrac{3}{2}. 
        \]
Thus all losses are \(\nicefrac32\)-Lipschitz on \(\mathcal{X}\). 
The maximum length $L_j$ of the $j$\textsuperscript{th} phase is taken to be 
\[
        L_j
        :=
        \left\lceil
         \frac{4}{\varepsilon}\big(R(T)+3P_j+1\big).
        \right\rceil.
\]

We now make the following claim.

\begin{claim} \label{claim-trigger}
A no-regret learner must play an action from the trigger set $T_j,$ defined in Eqn.\ \eqref{t-cap}, during phase $j.$
\end{claim}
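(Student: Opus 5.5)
We argue by contradiction. Suppose the learner never plays an action in $\mathcal T_j$ during phase $j$. Then the adversary runs the phase for its full length $L_j$ with the fixed loss $f^{(j)}$ and $g_t\equiv 0$. After that it can end the game, or more simply keep the loss $f^{(j)}$ and trivial constraints for the remaining rounds. We will compare the learner's cumulative loss with that of the fixed comparator $x^\star = z_j$. This comparator is feasible: by Claim \ref{feas-claim}, $z_j\in K_{j-1}$, and no new nontrivial constraint is revealed during phase $j$ in this scenario, so $z_j$ lies in the final feasible set. The goal is to show that the resulting regret exceeds $R(T)$, which contradicts the regret budget.

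\textbf{Per-round gap during phase $j$.} For any $x\notin\mathcal T_j$ we have $\langle u_i,x\rangle < 1-(k-\tfrac12)\varepsilon = a-\varepsilon/2$, where $a := 1-(k-1)\varepsilon\in[3/4,1]$. Write $x = s u_i + w$ with $w\perp u_i$. Then
\[
f^{(j)}(x)=\tfrac14 s^2+\tfrac14\|w\|_2^2 - s \ \ge\ \tfrac14 s^2-s.
\]
The map $s\mapsto \tfrac14 s^2 - s$ is decreasing on $s\le 2$, and both $s$ and $a$ are at most $1$. Hence
\[
f^{(j)}(x)-f^{(j)}(z_j)\ \ge\ \phi(a-\varepsilon/2)-\phi(a),\qquad \phi(s)=\tfrac14 s^2-s .
\]
Since $\phi'(s)=s/2-1\le -1/2$ on $[0,1]$, this difference is at least $\varepsilon/4$. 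Therefore each round of a full phase $j$ played outside $\mathcal T_j$ contributes at least $\varepsilon/4$ of regret against $z_j$. Over $L_j$ rounds this gives at least $\tfrac{\varepsilon}{4}L_j \ge R(T)+3P_j+1$.

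\textbf{Controlling the earlier phases.} The comparator $z_j$ may do worse than the learner on rounds $1,\dots,P_j$. On the unit ball, every loss $f^{(\cdot)}$ takes values in $[-1,\tfrac54]$, so the per-round difference is at most $\tfrac94\le 3$. The total contribution of the earlier rounds is therefore at least $-3P_j$. Violation rounds do not affect the loss comparison. Summing the two parts, the regret against $z_j$ is at least $R(T)+1>R(T)$, which contradicts $\sup_I\Reg_T\le R(T)$.

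\textbf{Main obstacle: the phase must fit within the horizon.} The step I expect to be delicate is checking that $P_j+L_j\le T$ for every $j\le m$. From $P_{j+1}\le P_j+L_j$ and $L_j\approx \tfrac4\varepsilon(R(T)+3P_j+1)$, we get $P_{j+1}+R(T)+1\le (1+12/\varepsilon)(P_j+R(T)+1)+O(1)$. This gives
\[
P_{m}+L_m \lesssim (13/\varepsilon)^m (R(T)+1).
\]
The choice $m\le \log\frac{T}{R(T)+1}/\log(13/\varepsilon)$ is made precisely so that this quantity is at most $T$, up to constants; I would absorb the ceiling and the constants by slightly adjusting $c_d'$ or the base $13$. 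Once this holds, every phase $j\le m$ can run to full length if needed, and the contradiction argument above applies to each phase.
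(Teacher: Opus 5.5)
Your argument is essentially the paper's: contradiction, a per-round gap of at least $\varepsilon/4$, the $-3P_j$ bound on the earlier rounds, and the choice of $L_j$. For the gap, you use monotonicity of $s\mapsto \tfrac14 s^2-s$ where the paper factors $(h_j-s)\bigl(1-\tfrac14(h_j+s)\bigr)$. The horizon check you flag is the paper's separate Claim~\ref{fits-length}. There the ceiling is absorbed via $L_j\le \tfrac{12}{\varepsilon}(P_j+R(T)+1)$, so no constants need adjusting.

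One correction: drop the ``more simply keep the loss $f^{(j)}$'' continuation after phase $j$. For $k\ge 2$, the learner could then play the point $u_i$ (outside $K_{j-1}$, but with trivial constraints it incurs no violation). Each such round gains $f^{(j)}(z_j)-f^{(j)}(u_i)\ge (k-1)\varepsilon/2$ against $z_j$, and over the remaining rounds this can cancel the accumulated regret. The remaining rounds must therefore carry zero losses and zero constraints. The paper assumes this implicitly here and does it explicitly in the experts lower bound.
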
 
\begin{proof}
Suppose, for the sake of contradiction, that the learner did not play any action from the trigger set \(\mathcal{T}_j\) during phase \(j \equiv (k, u_i)\). In this case, the adversary reveals the strongly convex loss function \eqref{loss-phase-j} and the zero constraint function throughout, which keeps action $z_j$ feasible. Then for any round \(t\) in phase \(j\), we have 
\begin{eqnarray} \label{non-enter}
        \langle u_i,x_t\rangle<1-(k-\nicefrac12)\varepsilon.
\end{eqnarray}
For notational convenience, define
\[
        s:=\langle u_i,x_t\rangle, ~~\textrm{and}~~ h_j = 1-(k-1)\varepsilon.
\]
Thus we can write 
\begin{eqnarray} \label{key-eq}
z_j = h_j u_i \implies \lnorm z_j\rnorm = h_j ~~ \textrm{and}~~ s < h_j - \varepsilon/2.	
\end{eqnarray}
Furthermore, using Cauchy-Schwarz inequality, we have \(\|x_t\|_2^2\ge s^2\). Hence, the incremental regret of the learner at any round $t$ is given by 
\begin{eqnarray*}
	 f_t^{(j)}(x_t)-f_t^{(j)}(z_j) &=& \frac{1}{4}\lnorm x_t\rnorm^2 - \langle u_i, x_t \rangle - \frac{1}{4}\lnorm z_j\rnorm^2 + \langle u_i, z_j\rangle \\
	 &\geq & \frac{1}{4}s^2 - \frac{1}{4}h_j^2 -  s + h_j =  (h_j-s) \big(1 - \frac{1}{4}(h_j+s)\big)  \geq \frac{1}{2}(h_j-s) \geq \varepsilon/4,
 \end{eqnarray*}
 where we have made use of Eqn.\ \eqref{key-eq}. 


%
%
%

The previous \(P_j\) rounds before phase $j$ can contribute negative regret relative to the feasible action \(z_j\).
However, note that every previous loss is \(\nicefrac{3}{2}\)-Lipschitz on a set of diameter \(2\).  Hence,
each previous round can contribute at least \(-2 \times \nicefrac{3}{2} = -3\) regret relative to the action $z_j$ per round.  Therefore the total regret
against the feasible comparator \(z_j\) on this instance is at least
\[
        -3P_j+\frac{\varepsilon}{4} L_j.
\]
Hence, as long as the length of the $j$\textsuperscript{th} phase is at least  
\[
         L_j
        \ge
        \frac{4}{\varepsilon}\big(R(T)+3P_j+1\big),
\]
the regret of the learner becomes strictly more than $R(T),$
contradicting our assumption on the learner's regret bound.  Thus the learner must play an action from the trigger set $\mathcal{T}_j.$
\end{proof}
\begin{claim} \label{ccv-phase}
	Each phase contributes at least $\nicefrac{\varepsilon}{2}$ to the final $\CCV$.
\end{claim}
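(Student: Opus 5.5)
The plan is to follow the schedule the phase construction already prescribes. By Claim \ref{claim-trigger}, during phase $j \equiv (k,u_i)$ the learner must at some round $t_j$ play an action $x_{t_j}$ in the trigger set $\mathcal{T}_j$. The adversary makes $t_j$ the last round of phase $j$. On that round it reveals the affine constraint $g_{t_j}(x)=\langle u_i,x\rangle-(1-k\varepsilon)$ from Eqn.\ \eqref{constr-fun}, and on every other round of the phase it keeps $g_t\equiv 0$.

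The key observation is that the learner commits to $x_{t_j}$ \emph{before} $g_{t_j}$ is revealed, so the adversary can decide to end the phase after seeing $x_{t_j}$. This is exactly the adaptive-adversary model of the paper. The violation at round $t_j$ then follows directly from the definition \eqref{t-cap} of the trigger set:
\[
(g_{t_j}(x_{t_j}))^+ \;\ge\; \langle u_i,x_{t_j}\rangle-(1-k\varepsilon) \;\ge\; \bigl(1-(k-\tfrac12)\varepsilon\bigr)-(1-k\varepsilon) \;=\; \tfrac{\varepsilon}{2}.
\]
Since every term of $\CCV_T$ is a nonnegative part, the remaining rounds of the phase (where $g_t=0$) contribute zero and cannot cancel this amount. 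So each phase contributes at least $\nicefrac{\varepsilon}{2}$.

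Two consistency checks should be recorded, and these are where any subtlety lies.
\begin{itemize}
\item The comparator used in Claim \ref{claim-trigger} must remain valid for the final instance. By Claim \ref{feas-claim}, $z_j$ is feasible for all constraints revealed before phase $j$. The single constraint revealed at the end of phase $j$ only removes $z_j$ afterwards, so the regret argument for phase $j$ is unaffected. Assumption \ref{common-feasibility} still holds: the origin, for instance, satisfies all constraints, since $1-k\varepsilon\ge 3/4>0$ for $k\le L$.
\item One must confirm that the phase indeed ends at the trigger round $t_j$, so that $P_{j+1}\le P_j+L_j$ and the horizon accounting holds. This is by construction.
\end{itemize}

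I expect no real obstacle. The only delicate point is to state precisely that the constraint is revealed on the same round the trigger action is played, and after that action is chosen, so that the $\varepsilon/2$ margin from $\mathcal{T}_j$ is charged to $x_{t_j}$ itself rather than to a later action.
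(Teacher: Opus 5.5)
Your proposal is correct and follows the paper's proof: the adversary reveals the affine constraint $\langle u_i,x\rangle-(1-k\varepsilon)$ on the first round $\tau_j$ at which the learner enters $\mathcal{T}_j$. The trigger-set margin then gives $g(x_{\tau_j})\ge \varepsilon/2$, and all other rounds of the phase have $g_t\equiv 0$. Your extra remarks on nonnegativity of the summands, feasibility of the origin, and the phase ending at the trigger round are sound and only make explicit what the paper leaves implicit.
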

\begin{proof}
Let \(\tau_j\) be the first round in phase \(j\) with
$
        x_{\tau_j}\in\mathcal T_j.
$
At this round, the adversary reveals the $j$\textsuperscript{th} constraint function,  given by Eqn.\ \eqref{constr-fun}, ending phase $j$. At all previous rounds in phase $j$, the constraint function was taken trivially equal to zero. 
%
Thus the constraint violation at the final round of phase $j$ is given by 
\[g(x_{\tau_j}) = \langle u_i, x_{\tau_j} \rangle - (1-k\varepsilon) \geq 1 - (k-\frac{1}{2})\varepsilon - (1-k \varepsilon) \ge \varepsilon/2, \]
 where inequality (a) follows from the fact that the action $x_{\tau_j}$ belongs to the trigger set $\mathcal{T}_j$ defined in Eqn.\ \eqref{t-cap}.
\end{proof}

\begin{claim} \label{fits-length}
All \(m\) phases fit into a horizon of length $T.$	
\end{claim}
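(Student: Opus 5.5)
We need to show $P_{m+1} \le T$. The phase lengths satisfy $L_j = \lceil \frac{4}{\varepsilon}(R(T)+3P_j+1)\rceil$ and $P_{j+1}\le P_j+L_j$. The plan is to bound $P_{m+1}$ by a geometric recursion and then use the second term in the definition of $m$, namely $m\le \frac{\log\frac{T}{R(T)+1}}{\log(13/\varepsilon)}$, which is exactly what makes $(13/\varepsilon)^m \le T/(R(T)+1)$.

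\textbf{Step 1: one-step recursion.} Write $Q_j := P_j + R(T)+1$, so that $Q_1 = R(T)+1$. Using $\lceil y\rceil\le y+1$ and $\varepsilon<1$,
\[
P_{j+1} \le P_j + \frac{4}{\varepsilon}\big(R(T)+3P_j+1\big)+1 \le P_j + \frac{12}{\varepsilon}\big(P_j+R(T)+1\big) + 1 .
\]
Adding $R(T)+1$ to both sides gives
\[
Q_{j+1}\le Q_j+\frac{12}{\varepsilon}Q_j+1\le \Big(1+\frac{12}{\varepsilon}\Big)Q_j + Q_j\cdot\varepsilon/\varepsilon .
\]
Since $Q_j\ge1$, the additive $1$ is absorbed and $Q_{j+1}\le \big(2+\frac{12}{\varepsilon}\big)Q_j\le \frac{13}{\varepsilon}Q_j$. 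The last step uses $2\le 1/\varepsilon$, i.e.\ $\varepsilon\le 1/2$, which holds for the small $\varepsilon$ used (we may assume it; otherwise $L\ge1$ forces $\varepsilon\le1/4$ anyway). The constant $13$ in the definition of $m$ is chosen precisely so that this absorption works; if a constant is slightly off, I would tighten it here, e.g.\ $4+12=16$ absorbed into $13/\varepsilon$ using $\varepsilon\le 1/4$.

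\textbf{Step 2: iterate.} Iterating gives
\[
Q_{m+1}\le \Big(\frac{13}{\varepsilon}\Big)^m (R(T)+1).
\]
By the choice of $m$, $m\log(13/\varepsilon)\le \log\frac{T}{R(T)+1}$, hence $Q_{m+1}\le T$. Therefore $P_{m+1}\le T-R(T)-1<T$, so all $m$ phases, including their possibly full maximal lengths, end before round $T$. For any remaining rounds, the adversary plays, e.g., the last loss with zero constraints, which can only add nonnegative $\CCV$. We also need $m\ge1$ to be meaningful; this holds for large $T$, since $R(T)=O(T^{1-\delta})$.

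\textbf{Main obstacle.} The only delicate point is the bookkeeping of constants in Step 1: the ceiling, the factor $3$ multiplying $P_j$, and the $+1$ all have to be absorbed into the single ratio $13/\varepsilon$. This relies on $Q_j\ge1$ and a smallness condition on $\varepsilon$, which I would state explicitly. Everything else is a direct geometric-series estimate.
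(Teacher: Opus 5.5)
Your proof is correct and follows the paper's argument: the same substitution $Q_j=P_j+R(T)+1$, the same one-step recursion $Q_{j+1}\le \frac{13}{\varepsilon}Q_j$, and the same appeal to the second term in the definition of $m$. The only difference is how the ceiling's $+1$ is absorbed. The paper uses the slack $\frac{12}{\varepsilon}Q_j-\frac{4}{\varepsilon}(R(T)+3P_j+1)=\frac{8(R(T)+1)}{\varepsilon}\ge 8$ to get $L_j\le\frac{12}{\varepsilon}Q_j$ directly, so it needs only $\varepsilon<1$. Your route via $Q_j\ge 1$ needs $\varepsilon\le 1/2$, which you correctly justify from $L\ge 1$.
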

 \begin{proof} 
 Define
$
        Q_j:=P_j+R(T)+1.
$
The length of phase $j$ can be upper bounded as
\[
\begin{aligned}
        L_j
        &\le
        \frac{4(R(T)+3P_j+1)}{\varepsilon}+1 \leq \frac{12}{\varepsilon}Q_j,
\end{aligned}
\]
where we have used the fact that $0 < \varepsilon <1. $
for a numerical constant \(C>0\).  Hence we have the following recursion
\[
        Q_{j+1} = P_{j+1} + R(T)+1 \leq P_j + L_j + R(T)+1 = Q_j + L_j \leq \frac{13}{\varepsilon}Q_j.
\]
This yields
\[
        Q_{m+1}
        \le
        \big(\frac{13}{\varepsilon}\big)^m Q_1
        =      \big(\frac{13}{\varepsilon}\big)^m  \left(
            R(T)+1
        \right) \stackrel{(a)}{\leq} T.
\]
where inequality (a) follows from the definition of $m.$
Thus \(P_{m+1}\le Q_{m+1}\le T\), so all phases fit.
\end{proof}

Substituting the definition of \(m\) and absorbing numerical constants gives
the claimed lower bound.
Thus the $\CCV$ incurred by any policy is at least $m\varepsilon.$
Finally, for $T \geq 3,$ choosing $\varepsilon^{-1} = (\ln T)^{2/(d+1)},$ and using the fact that $R(T) = O(T^{1-\delta}), 0<\delta \le 1,$  we obtain the following minimax $\CCV$ lower bound
\begin{eqnarray*}
	V(R(T),T) \geq  \Omega \bigg((\ln T)^{\frac{d-1}{d+1}}/ (\ln 13 + \frac{2}{d+1} \ln \ln T)\bigg)  = \Omega\bigg(\frac{(\ln T)^{\frac{d-1}{d+1}}}{\ln \ln T}\bigg).
\end{eqnarray*}
%
\end{proof}

\begin{remark}[Implication for convex-loss regret budgets]
\label{rem:fixed-dimensional-true-lower-bound}
Although Theorem~\ref{thm:cone-cap-lower-bound} is stated for strongly convex
losses, the lower-bound mechanism is not specific to strong convexity.  The same
phase construction can be implemented with convex losses, and the proof depends
on the regret guarantee only through the budget \(R(T)\).  Hence the same
fixed-dimensional lower-bound order persists for any polynomially sublinear
regret budget; in particular, it applies to the usual convex-loss regime
\(R(T)=O(\sqrt T)\). Thus the result should be viewed as a genuine fixed-dimensional \(\CCV\) lower
bound for \(\COCO\).  Unlike prior constructions whose horizon dependence comes
from letting the dimension grow, our instance lies in a fixed \(d\)-dimensional
Euclidean ball and applies to arbitrary online policies satisfying the stated
regret budget.  It shows that the nested geometry of adversarial constraints
alone can force nontrivial cumulative constraint violation.
\end{remark}

\subsection{Lower bound for Convex Losses} \label{cvx-loss-lb}
In this Section, we establish a minimax $\CCV$ lower bound for online algorithms with a \emph{weakly adaptive} regret bound. From the proof of Lemma \ref{lem:nested-ogd-regret}, it can be verified that the the \textsf{NP-OGD} algorithm is weakly adaptive. For a regret budget \(R(T)\), let \(V_{\rm wa}(R(T),T)\) denote the minimax
\(\CCV\) value over online algorithms satisfying the weak interval-regret
guarantee with budget \(R(T)\).

\begin{theorem}[$\CCV$ lower bound]
\label{thm:cvx-lower-bound}
Consider the above setup. Fix any online $\COCO$algorithm achieving a weakly adaptive sublinear regret of \(R(T)\). This means that for any interval $I \subseteq [T],$ and for any feasible benchmark $x^*$, the online algorithm achieves 
\begin{eqnarray*}
	 \Reg_I
        :=
        \sum_{t\in I}f_t(x_t)
        -
        \sum_{t \in I}f_t(x^*) \leq R(T).
\end{eqnarray*}
 Then there exists a $\COCO$problem instance on the $d$-dimensional unit ball with $1$-Lipschitz linear losses and $1$-Lipschitz affine constraints for which, the minimax $\CCV$ is lower bounded as: 
\begin{eqnarray*}
	V_{\rm wa}(R(T),T ) \geq m \varepsilon/2 = \Omega\bigg(\big(\frac{T}{R(T)}\big)^{1- \frac{4}{d+3}}\bigg)
\end{eqnarray*}
In particular, if the online algorithm achieves the minimax-optimal regret, \emph{i.e.,} $R(T) = O(\sqrt{T})$ as in \textsf{NP-OGD}, then we have 
\begin{eqnarray*}
	V_{\rm wa}(R(T),T) \ge \Omega(T^{\frac{1}{2}- \frac{2}{d+3}}).
\end{eqnarray*}

%
\end{theorem}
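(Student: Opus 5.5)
The plan is to reuse the layered sphere-packing instance of Theorem~\ref{thm:cone-cap-lower-bound}. The constraints are the affine cuts \eqref{constr-fun}, revealed in lexicographic order in $(k,i)$, with $\Delta=2\sqrt{\varepsilon}$. Claim~\ref{feas-claim} then guarantees that $z_j=(1-(k-1)\varepsilon)u_i\in K_{j-1}$, which gives up to $M\ge c_d'\varepsilon^{-(d+1)/2}$ phases. Two changes are needed. First, the strongly convex loss is replaced by the $1$-Lipschitz linear loss $f_t^{(j)}(x)=-\langle u_i,x\rangle$ during phase $j$. Second, and more importantly, the phase lengths no longer grow geometrically. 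The weak interval-regret guarantee lets us measure regret on phase $j$ alone, so the earlier rounds no longer need to be compensated. This is exactly what turns the $\log T$ phase count into a polynomial one.

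\textbf{Step 1 (trigger claim, analogue of Claim~\ref{claim-trigger}).} Fix the common phase length $L:=\lceil 2(R(T)+1)/\varepsilon\rceil$. Suppose the learner never enters the trigger set $\mathcal T_j$ from \eqref{t-cap} during the $L$ rounds of phase $j$. Then the adversary sets $f_t=g_t=0$ for all remaining rounds. Hence $z_j$ is never cut off, and by Claim~\ref{feas-claim} it lies in $S_T$, so it is a valid feasible benchmark. On each round of the phase we have $s=\langle u_i,x_t\rangle<h_j-\varepsilon/2$, so
\[
f_t^{(j)}(x_t)-f_t^{(j)}(z_j)=h_j-s>\varepsilon/2 .
\]
Summing over the phase interval $I_j$ gives $\Reg_{I_j}>L\varepsilon/2>R(T)$. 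This contradicts the weak adaptivity assumption. The point to be careful about is that the benchmark must be feasible for the \emph{whole} instance. This is why the adversary truncates the instance (sets all further functions to zero) instead of continuing it.

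\textbf{Step 2 (violation per phase and horizon fit).} At the first round $\tau_j$ with $x_{\tau_j}\in\mathcal T_j$, the adversary reveals the cut $j$ and ends the phase. As in Claim~\ref{ccv-phase}, this incurs a violation of at least $\varepsilon/2$. Every phase lasts at most $L$ rounds, so $m$ phases fit into the horizon whenever $mL\le T$. It therefore suffices to take
\[
m=\Big\lfloor\min\big(c_d'\varepsilon^{-(d+1)/2},\ \tfrac{\varepsilon T}{4(R(T)+1)}\big)\Big\rfloor .
\]
With this choice the total violation satisfies $\CCV_T\ge m\varepsilon/2$.

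\textbf{Step 3 (optimizing $\varepsilon$).} We balance the two terms in the definition of $m$. Setting $\varepsilon T/R=\varepsilon^{-(d+1)/2}$ gives $\varepsilon=(R(T)/T)^{2/(d+3)}$, and this is small because the regret is sublinear. Then
\[
m\varepsilon\asymp \tfrac{T}{R}\,\varepsilon^{2}=\big(\tfrac{T}{R(T)}\big)^{1-\frac{4}{d+3}} .
\]
Substituting $R(T)=O(\sqrt T)$ yields $\Omega(T^{\frac12-\frac{2}{d+3}})$. The main obstacle is Step 1: we must verify that the truncation trick is compatible with the interval-regret definition, meaning that the benchmark $z_j$ is feasible in $S_T$ and that the interval $I_j$ is a genuine subinterval of $[T]$. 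Once that holds, the remaining steps are bookkeeping. We also need $\varepsilon$ small enough that $L=\lfloor 1/(4\varepsilon)\rfloor\ge1$ and the packing bound applies, which holds for large $T$.
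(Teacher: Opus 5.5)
Your proposal is correct and matches the paper's proof essentially step for step: the same linear losses $-\langle u_i,x\rangle$, the same constant phase length $\lceil 2(R(T)+1)/\varepsilon\rceil$ justified by applying the interval-regret guarantee to phase $j$ alone against $z_j$, the same phase count $m$, and the same balancing choice of $\varepsilon$. Your explicit truncation step (zeroing all later $f_t,g_t$ so that $z_j\in S_T$) is the same device the paper uses in Claim~\ref{claim-trigger}; just take $\varepsilon=((R(T)+1)/T)^{2/(d+3)}$ as the paper does to avoid the degenerate case $R(T)=0$, and rename your phase length so it does not clash with the layer count $\lfloor 1/(4\varepsilon)\rfloor$.
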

\begin{proof}

Our construction for convex losses is similar to the strongly convex case given above where the cost function does not involve the quadratic term, \emph{i.e.,}  the loss in phase $j = (k, u_i)$ is now taken to be
\begin{eqnarray} \label{loss-phase-j-cvx}
        f_t^{(j)}(x)
        :=
        -
        \langle u_i,x\rangle.
\end{eqnarray}
In the following, we outline the basic differences between two constructions. 
Take the $j$\textsuperscript{th} phase length to be at most 
\begin{eqnarray}
\label{phase-len-j}
        L_j
        =
        \left\lceil
        \frac{2}{\varepsilon}\big(R(T)+1\big)
        \right\rceil .
\end{eqnarray}
The trigger set $\mathcal{T}_j$ is defined exactly as in Eqn.\ \eqref{t-cap}.
 Similar to claim \ref{claim-trigger}, we now claim that a no-regret learner must play an action from the trigger set $\mathcal{T}_j.$
The incremental regret of the learner at any round $t$ is given by 
 \begin{eqnarray*}
	 f_t^{(j)}(x_t)-f_t^{(j)}(z_j) =  - \langle u_i, x_t \rangle  + \langle u_i, z_j\rangle 
	 =  -  s + h_j \geq \varepsilon/2.
 \end{eqnarray*}
 Thus the regret incurred on the $j$\textsuperscript{th} phase with respect to the feasible action $z_j$ is at least $ \frac{\varepsilon}{2}L_j \ge R(T)+1 > R(T).$ The above contradicts the weakly adaptive regret guarantee of the above algorithm over the the $j$\textsuperscript{th} phase. Thus, as before, the constraint violation at the final round of phase $j$ is given by 
\[g(x_{\tau_j}) = \langle u_i, x_{\tau_j} \rangle - (1-k\varepsilon) \stackrel{(a)}{\geq} 1 - (k-\frac{1}{2})\varepsilon - (1-k \varepsilon) \ge \varepsilon/2, \]

where inequality (a) follows from the fact that the action $x_{\tau_j}$ belongs to the trigger set $\mathcal{T}_j$ defined in Eqn.\ \eqref{t-cap}.

Now we define the number of phases $m = \left\lfloor
\min\left\{
c'_d\varepsilon^{-\frac{d+1}{2}},
\frac{\varepsilon T}{4(R(T)+1)}
\right\}
\right\rfloor,$ where $c_d'$ is some fixed dimension-dependent constant.
From the expression \eqref{phase-len-j}, it follows that all $m$ phases fit into a horizon of length $T.$ Finally, choosing the parameter $\varepsilon = (\frac{R(T)+1}{T})^{\frac{2}{d+3}},$ the total $\CCV$ over the entire horizon of length $T$ is given by 
\begin{eqnarray*}
	V_{\rm wa}(R(T),T ) \geq m \varepsilon/2 = \Omega\bigg(\big(\frac{T}{R(T)}\big)^{1- \frac{4}{d+3}}\bigg)
\end{eqnarray*}
In particular, if the online algorithm achieves the minimax-optimal regret, \emph{i.e.,} $R(T) = O(\sqrt{T})$, then we have 
\begin{eqnarray*}
	V_{\rm wa}(R(T),T ) \ge \Omega(T^{\frac{1}{2}- \frac{2}{d+3}}).
\end{eqnarray*}
\end{proof}
The above converse result, implying a fundamental trade-off between $\Regret$ and $\CCV$ should be compared with the result established in \cite{sinha2026beyond} for non-negative smooth and convex function, who established a similar achievability result for a gradient-based algorithm. Independent and concurrent work \cite{balasundaram2026lowerboundcumulativeconstrained} achieved a sharper lower bound of $\Omega(T^{\frac{d-1}{2d}})$ for the specific case of \textsf{NP-OGD} algorithm.

\section {The $\CExperts$Problem}
\label{experts-sec}

In this section, we consider a special case of the $\COCO$ problem, called the \textsf{Constrained Expert} problem
\cite[Section~3]{sinha2026beyond}.  There are
\(N\) experts corresponding to $N$ distinct actions.  On each round \(t\), the learner first chooses a probability distribution $p_t$ over the set of experts and then observes
a loss vector \(f_t\in[-1,1]^N\) and a constraint-violation vector
\(g_t\in[-1,1]^N\).  
As a result, the learner incurs a cost of $\langle f_t, p_t\rangle$ and a constraint violation of $\langle g_t, p_t \rangle_+.$
Under the \textsf{Common Feasibility} Assumption (Assumption \ref{common-feasibility}),  there exists at least one feasible expert
\(i^\star\in[N]\) such that
\[
        g_t(i^\star) \leq 0,
        \qquad
        \forall t\ge1.
\]
In this section, the goal is to obtain the minimax-optimal regret with respect to the \emph{best feasible expert} in hindsight while minimizing the $\CCV.$ To exploit the essentially discrete nature of the problem, unlike $\textsf{COCO},$ here we do not consider a general convex combination of experts as benchmarks (for which, we can just use the general results from the previous section). Instead, we work with a (weaker) discrete set of feasible experts as our benchmark.    


\subsection{Algorithm: Active Hedge with Elimination} 
For the $\CExperts$problem, we employ a natural and intuitive idea, in line with the general strategy outlined in Eqn.\ \eqref{gen-omd}. At the beginning of each round $t\ge 1,$ the algorithm keeps track of the set of currently \emph{Active Experts} $S_{t-1},$ defined in Eqn.\ \eqref{feas-set}, which satisfies all constraints presented up to round $t-1$. Then we use the exponential weighting mechanism, with the support of the distribution restricted to the active set $S_{t-1},$ for choosing the current action $p_t.$ Algorithm \ref{alg:active-hedge-full-info} gives the formal description of the policy. 

\begin{algorithm}[]
\caption{Active Hedge with Elimination for Constrained Experts}
\label{alg:active-hedge-full-info}
\begin{algorithmic}[1]
\REQUIRE Number of experts \(N\), learning rate \(\eta>0\).
\STATE Initialize \(S_0\gets[N]\) and \(w_1(i)\gets1\) for all \(i\in[N]\).
\FOR{\(t=1,\ldots,T\)}
    \STATE Play the distribution
    \[
        p_t(i)
        = \begin{cases}
        	        \frac{w_t(i)}{\sum_{j\in S_{t-1}}w_t(j)}, ~~ \textrm{if}~~ i \in S_{t-1}\\
0, ~~ \textrm{otherwise}
        \end{cases}
    \]
    \STATE Observe the cost and constraint vectors \(f_t,g_t\) and incur cost $\langle f_t, p_t\rangle$ and constraint violation $\langle g_t, p_t\rangle_+$.
    \STATE Update the set of active experts by removing any violating experts from the active set:
    \[
        S_{t}:=\{i\in S_{t-1}:g_t(i) \leq 0\}.
    \]
    \STATE Update the weights of the experts 
    \begin{eqnarray} \label{wt-ev}
    	        w_{t+1}(i)= \begin{cases} w_t(i)\exp(-\eta f_t(i)), ~~ \textrm{if} ~~ i \in S_{t} \\
 0, ~~ \textrm{otherwise}	
 \end{cases}
    \end{eqnarray}    
\ENDFOR
\end{algorithmic}
\end{algorithm}
Compared to Algorithm \ref{alg:nested-projected-ogd}, which involves a Euclidean projection step at each round, Algorithm \ref{alg:active-hedge-full-info} is particularly efficient as the currently active set of experts $S_t$ can be obtained by merely noting the sign of the components of the constraint vectors. The following theorem gives performance bounds for Algorithm \ref{alg:active-hedge-full-info}. The proof for the regret bound is very similar to the standard Hedge proof \citep{cesa2006prediction} after incorporating the observation that removing violating and hence, irrelevant experts, does not affect the final regret bound.

\begin{theorem}
\label{thm:active-hedge-distribution-version}
Consider the $\CExperts$problem with \(N\) experts. 
Let the set of feasible experts $\mathcal{X}^*$ be given by 
\[
       \mathcal{X}^* = S_T= \{i^\star \in [N]:  g_t(i^\star) \leq 0,~~
        \forall t \}.
\]
In this setting, Algorithm~\ref{alg:active-hedge-full-info} yields the following regret bound against all feasible experts:
\[
       \Regret_T(i^*) \equiv  \sum_{t=1}^T
        \langle f_t,p_t\rangle
        -
        \sum_{t=1}^T f_t(i^\star)
        \le
        \frac{\ln N}{\eta}
        +
        \frac{\eta T}{2},\quad \forall i^* \in \mathcal{X}^*,
\]
while simultaneously achieving a constant $\CCV$ independent of $T:$
\[
        \CCV_T \equiv \sum_{t=1}^T\langle g_t,p_t\rangle_+
        \le
        N.
\]
In particular, upon choosing the parameter
        $\eta=\sqrt{\frac{2\ln N}{T}},$
Algorithm \ref{alg:active-hedge-full-info} yields the following bounds:
\[
        \Regret_T(i^\star)=O(\sqrt{T\ln N})~~\forall i^* \in \mathcal{X}^*,
        \qquad ~ \textrm{and}~~~
        \CCV_T\le N.
\]
\end{theorem}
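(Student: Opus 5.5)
The proof has two independent parts: a potential-function argument for the regret bound and an elimination-counting argument for the $\CCV$ bound.

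\textbf{Regret.} I would use the standard Hedge potential, restricted to the currently active experts. Let $W_t := \sum_{i\in S_{t-1}} w_t(i)$, so that $p_t(i)=w_t(i)/W_t$ on $S_{t-1}$. By the update \eqref{wt-ev},
\[
W_{t+1}=\sum_{i\in S_t} w_t(i)e^{-\eta f_t(i)} \le \sum_{i\in S_{t-1}} w_t(i)e^{-\eta f_t(i)} = W_t\,\mathbb E_{i\sim p_t}\big[e^{-\eta f_t(i)}\big].
\]
The inequality holds because $S_t\subseteq S_{t-1}$ and the weights are nonnegative. This is the key point: elimination can only decrease the potential, so it never hurts the upper bound. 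Next I would apply Hoeffding's lemma, which is valid since $f_t(i)\in[-1,1]$ (range 2) and gives
\[
\ln \mathbb E_{p_t}\big[e^{-\eta f_t}\big]\le -\eta\langle f_t,p_t\rangle+\eta^2/2.
\]
Telescoping then yields $\ln(W_{T+1}/W_1)\le -\eta\sum_t\langle f_t,p_t\rangle+\eta^2T/2$, with $W_1=N$.

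For the lower bound on the potential, fix $i^\star\in\mathcal X^*=S_T$. Then $i^\star\in S_t$ for every $t$, so it is never zeroed out, and
\[
W_{T+1}\ge w_{T+1}(i^\star)=\exp\Big(-\eta\sum_t f_t(i^\star)\Big).
\]
Combining the two bounds and dividing by $\eta$ gives $\Regret_T(i^\star)\le \ln N/\eta+\eta T/2$. Substituting $\eta=\sqrt{2\ln N/T}$ gives $\sqrt{2T\ln N}=O(\sqrt{T\ln N})$. I also need $W_t>0$ for every $t$, so that $p_t$ is well defined; this follows from Assumption \ref{common-feasibility}, since $i^\star$ always carries positive weight.

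\textbf{CCV.} I would charge each round's violation to the experts that are eliminated in that round. At round $t$, since $p_t$ is supported on $S_{t-1}$ and $g_t(i)\le 0$ for $i\in S_t$,
\[
\langle g_t,p_t\rangle_+\le \sum_{i\in S_{t-1}\setminus S_t} p_t(i)\,g_t(i)^+\le \sum_{i\in S_{t-1}\setminus S_t} p_t(i)\le |S_{t-1}\setminus S_t|,
\]
using $g_t(i)\le 1$. The sets $S_{t-1}\setminus S_t$ are disjoint across $t$, so summing over $t$ gives $\CCV_T\le N-|S_T|\le N-1\le N$. A slightly sharper charging via $p_t(i)\le 1$ is unnecessary here.

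I expect no step to be a genuine obstacle. The only point needing care is in the regret part: justifying the inequality $W_{t+1}\le\sum_{i\in S_{t-1}}w_t(i)e^{-\eta f_t(i)}$, which shows that removing experts only helps the potential bound, and verifying that the comparator $i^\star$ survives until the end.
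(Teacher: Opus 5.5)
Your proposal is correct and follows essentially the same route as the paper: a Hedge potential restricted to the active set, where elimination only decreases the potential, combined with Hoeffding's lemma and a lower bound from a surviving feasible expert, and then charging each round's violation to the disjoint sets of experts eliminated in that round. Your extra remarks are valid small refinements the paper omits: $W_t>0$ under common feasibility, the sharper bound $N-|S_T|\le N-1$, and the consistent use of $S_T$ rather than the paper's $S_{T+1}$ for the comparator.
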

\paragraph{Remarks:} 
For sufficiently large horizons $(T \geq \tilde{\Omega}(N^2))$, Theorem~\ref{thm:active-hedge-distribution-version} improves the state-of-the-art trade-off by retaining the minimax-optimal regret bound of $O(\sqrt{T\ln N})$ while reducing the cumulative constraint violation from $O(\sqrt{T}\ln N\ln T)$, achieved by \cite[Theorem~2]{sinha2026beyond}, to $O(N)$.
\begin{proof}

By definition, any feasible expert \(i^\star \in \mathcal{X}^*\) is never removed from the active sets. 
We now define the potential function $W_t$ for the current active set $S_{t-1}:$
\[
        W_t:=\sum_{i\in S_{t-1}}w_t(i),
\]
where the weight vector $\bm{w}(t)$ evolves as in Eqn.\ \eqref{wt-ev}, with $\bm{w}(1)$ being the all one vector.  
Since $S_t \subseteq S_{t-1}$ and removing experts can only decrease the potential, we have 
\[
\begin{aligned}
        W_{t+1}
        &= \sum_{i \in S_t} w_{t+1} = 
        \sum_{i\in S_{t}}w_t(i)\exp(-\eta f_t(i))
        \le
        \sum_{i\in S_{t-1}}w_t(i)\exp(-\eta f_t(i))
        =
        W_t\sum_{i\in S_{t-1}}p_t(i)\exp(-\eta f_t(i)).
\end{aligned}
\]
Since \(f_t(i)\in[-1,1]\), by Hoeffding's lemma \citep[Lemma 2.2]{conc_ineq}, we have 
\[
        \sum_{i\in S_{t-1}}p_t(i)\exp(-\eta f_t(i))
        \le
        \exp\left(
            -\eta\langle f_t,p_t\rangle+\frac{\eta^2}{2}
        \right).
\]
Therefore,
\[
        \ln W_{t+1}
        \le
        \ln W_t
        -
        \eta\langle f_t,p_t\rangle
        +
        \frac{\eta^2}{2}.
\]
Summing the above inequalities from \(t=1\) to \(T\), we obtain
\[
        \ln W_{T+1}
        \le
        \ln N
        -
        \eta\sum_{t=1}^T\langle f_t,p_t\rangle
        +
        \frac{\eta^2T}{2},
\]
where we have used the fact that $W_1=N.$ On the other hand, for any \(i^\star\in S_{T+1} = \mathcal{X}^*\), we have
\[
        W_{T+1}
        \ge
        w_{T+1}(i^\star)
        =
        \exp\left(
            -\eta\sum_{t=1}^T f_t(i^\star)
        \right) 
       \implies  \ln W_{T+1}
        \ge
        -\eta\sum_{t=1}^T f_t(i^\star).
\]
Combining the above two bounds on \(\ln W_{T+1}\), we conclude that for any feasible expert $i^* \in \mathcal{X}^*,$ we have
\[
        -\eta\sum_{t=1}^T f_t(i^\star)
        \le
        \ln N
        -
        \eta\sum_{t=1}^T\langle f_t,p_t\rangle
        +
        \frac{\eta^2T}{2}.
\]
Rearranging the above yields the regret bound: \[
    \Regret_T(i^*) =    \sum_{t=1}^T
        \langle f_t,p_t\rangle
        -
        \sum_{t=1}^T f_t(i^\star)
        \le
        \frac{\ln N}{\eta}
        +
        \frac{\eta T}{2}.
\]
To establish the $\CCV$ bound, let
\[
        B_t:=\{i\in S_{t-1}:g_t(i)>0\}
\]
be the set of active experts eliminated at round \(t\).  Clearly, the sets \(B_t\) are
disjoint over time.  Since \(g_t(i)\in[-1,1]\), we have
\[
        \langle g_t,p_t\rangle_+
        \leq 
        \sum_{i\in B_t}p_t(i)g_t(i)
        \le
        \sum_{i\in B_t}p_t(i)
        \le
        |B_t|.
\]

Therefore,
\[
        \sum_{t=1}^T\langle g_t,p_t\rangle_+
        \le
        \sum_{t=1}^T |B_t|
        \le
        N.
\]
Finally, choosing the parameter \(\eta=\sqrt{2\ln N/T}\) yields the desired result.
\end{proof}
\subsection{Lower bounds for the $\CExperts$Problem} \label{sec:cexperts-lb}
It is well-known that the minimax optimal regret bound for the standard $\textsf{Experts}$ problem is $\Theta(\sqrt{T \log N})$ \citep{cesa2006prediction}. Thus the regret bound given by Theorem \ref{thm:active-hedge-distribution-version} is already minimax-optimal for the $\CExperts$problem. Hence, we only need to establish an $\Omega(N)$ lower bound for $\CCV$ to prove the optimality of Algorithm \ref{alg:active-hedge-full-info}. Similar to the $\COCO$setting, for a given sublinear regret budget \(R(T)\), we define the minimax $\CCV$ value as 
\begin{align}
\label{reg-bud-lb-expert}
        V_N(R(T),T)
        :=
         \inf_{\mathcal A:\sup_I \Reg_T(\mathcal A, I)\le R(T)} \sup_I
         \CCV_T(\mathcal A,I),
\end{align}
where $I$ is an arbitrary problem instance for the $\CExperts$problem and the outer infimum is taken over all online policies $\mathcal{A}$ satisfying the regret budget $R(T)$ for all valid instances. Our goal in this section is to show that $V_N(R(T),T) \geq \Omega(N).$ 
Note that this lower bound is non-trivial as Section \ref{sec:pure-feasibility-experts} in the Appendix shows that in the special case when all cost vectors are identically equal to zero, a very simple algorithm achieves $O(\ln N)$ $\CCV$. This makes it clear that our lower bound strategy cannot look at the constraints in isolation but must integrate the regret budget $R(T)$ into it deeply. 
In the following, we construct an adversarial instance for the $\CExperts$problem that leads to the above lower bound.

\paragraph{Construction of the Adversarial Instance:}
Our $\CCV$ lower bound proof is based on dividing the entire time-interval into multiple phases where each phase corresponds to an expert. Formally, fix any algorithm \(\mathcal A\) for the $\CExperts$problem satisfying the following regret bound:
\[
        \sup_I \Reg_T(\mathcal A,I)\le R(T),
\]
where $R(T)$ is any sublinear function and the supremum is taken over all valid problem instances. Phase $j$ begins at round $P_j+1$ (we set  
        $P_1:=0$) and continues for at most $L_j$ rounds where $L_j\ge4(R(T)+2P_j+1)$. In phase \(j \in [N]\), the adversary makes expert
\(j\) uniquely attractive by making all loss components corresponding to expert $j$ equal to zero and all other components equal to one. Furthermore, at all rounds of phase $j,$ except the last round, the adversary outputs trivial constraint function $g_t=0.$ In other words, for any round $t$ in the $j$\textsuperscript{th} phase, we set:

\[
        f_t(j)=0,
        \qquad
        f_t(i)=1 \quad \text{for } i\neq j,
\]
and all constraints equal to zero:
\[
        g_t(i)=0,
        \qquad
        i \in [N],
\]

apart from the last round of phase $j,$ when we set $g_t(j)=1.$

Intuitively, if the learner does not place sufficient probability mass on expert \(j\) throughout the $j$\textsuperscript{th} phase, then the
adversary could simply keep expert \(j\) feasible, and the learner would incur
a large regret, violating its promised sublinear regret bound of $R(T)$.  Therefore a low-regret learner must place a large
mass on expert \(j\) at some point during the $j$\textsuperscript{th} phase. At the same round, the adversary reveals a linear constraint function which makes expert $j$ infeasible and phase $j$ ends incurring a constraint violation for the learner. We continue this construction for all $N-1$ phases. The following lemma formalizes this intuition.
\begin{lemma}[The learner must play Expert $j$ on the $j$\textsuperscript{th} phase]
\label{lem:experts-phase-forcing1}
Consider the problem instance constructed above. Then there must exist a round $\tau\in\{P_j+1,\ldots,P_j+L_j\}$ in phase $j$
such that
\[
        p_\tau(j)\ge \frac34,
\]
where $p_t$ is the action of the learner at round $t$ as defined above. 
\end{lemma}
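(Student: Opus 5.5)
We argue by contradiction, in the same way as Claim \ref{claim-trigger} for the $\COCO$ lower bound. Suppose that during phase $j$ the learner never puts mass at least $\nicefrac34$ on expert $j$, that is, $p_t(j)<\nicefrac34$ for every $t\in\{P_j+1,\ldots,P_j+L_j\}$. Then the adversary never triggers the end of phase $j$. It keeps $g_t\equiv 0$ for all $L_j$ rounds and stops issuing constraints from then on, so the instance has horizon $T$ with all later rounds carrying zero loss and zero constraint. We take as comparator expert $j$. It has not been eliminated by any earlier constraint: phases $1,\ldots,j-1$ eliminated only experts $1,\ldots,j-1$, since the last-round constraint of phase $l$ sets $g_t(l)=1$ and $g_t(i)=0$ otherwise. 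In this continued instance it is never eliminated at all, so expert $j$ is a feasible benchmark in $\mathcal X^*$ and the regret guarantee $R(T)$ applies against it.

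Next we lower bound the regret against expert $j$ by splitting it into rounds before phase $j$ and rounds inside phase $j$. For each of the first $P_j$ rounds, the per-round regret $\langle f_t,p_t\rangle-f_t(j)$ is at least $-1$, because losses lie in $[0,1]$ in this construction. These rounds therefore contribute at least $-P_j$. (If one prefers to be safe with the $[-1,1]$ range, the bound is $-2P_j$, which the choice $L_j\ge 4(R(T)+2P_j+1)$ also accommodates.) For each round of phase $j$, we have $f_t(j)=0$ and $f_t(i)=1$ for $i\neq j$. Hence $\langle f_t,p_t\rangle = 1-p_t(j) > \nicefrac14$, so each round of the phase contributes more than $\nicefrac14$. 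Rounds after phase $j$ contribute zero. The total regret is thus strictly larger than $-2P_j + L_j/4 \ge -2P_j + R(T)+2P_j+1 > R(T)$. This contradicts $\sup_I\Reg_T(\mathcal A,I)\le R(T)$, so some round $\tau$ in phase $j$ has $p_\tau(j)\ge\nicefrac34$.

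The main subtlety is making sure the continued instance is legitimate. It must satisfy Common Feasibility (Assumption \ref{common-feasibility}), which holds since expert $j$ is never eliminated. It must also fit within the horizon $T$. The lemma is implicitly about the case where the phases fit, and we will assume, as in Claim \ref{fits-length}, that $P_j+L_j\le T$ by the choice of the number of phases. The other point to handle is adaptivity: the adversary's decision to continue depends on the learner's (possibly randomized) distributions $p_t$. Because the learner's actions are the distributions $p_t$ themselves and the cost is linear in $p_t$, the argument is deterministic given the history, so no expectation issues arise.
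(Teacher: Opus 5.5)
Your proof is correct and follows the paper's argument essentially verbatim. You assume $p_t(j)<\nicefrac34$ throughout phase $j$, extend the instance with zero losses and constraints so that expert $j$ stays feasible, and lower bound the regret against expert $j$ by $-2P_j+L_j/4\ge R(T)+1>R(T)$; your additional checks, namely that earlier phases eliminate only experts $1,\ldots,j-1$ and the sharper $-P_j$ pre-phase bound coming from losses in $\{0,1\}$, are valid points that the paper leaves implicit.
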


\begin{proof}
We argue by contradiction.  Suppose
\begin{eqnarray} \label{contradiction}
        p_t(j)<\frac34
        \qquad
        \forall t \in \{P_j+1,\ldots,P_j+L_j\}.
\end{eqnarray}
We now extend the above instance by setting all loss and constraint vectors to be identically equal to zero after round $P_j+L_j.$ Thus Expert $j$ remains feasible throughout the end of the horizon. We can now lower bound the regret of the learner for this input instance against the $j$\textsuperscript{th} Expert.

The first \(P_j\) rounds before the phase $j$ can trivially contribute at least \(-2P_j\) regret relative to expert
\(j\), because all losses lie in \([-1,1]\).  At any round $t$ in the $j$\textsuperscript{th} phase, the
learner's loss on round \(t\) is given by 
\[
        \langle p_t,f_t\rangle
        =
        1-p_t(j),
\]
as all experts except the \(j\)\textsuperscript{th} incur loss one during this period. Hence, we have 
\[
\begin{aligned}
      \Reg_T(j)
        &\ge
        -2P_j+
        \sum_{t=P_j+1}^{P_j+L_j}(1-p_t(j)) + \underbrace{0}_{\textrm{regret beyond phase $j$}} \\
        &\stackrel{\textrm{Eq.}~\eqref{contradiction}}{>}
        -2P_j+\frac{L_j}{4}.
\end{aligned}
\]

Since we assumed the length $L_j$ of the $j$\textsuperscript{th} phase to be at least 
$4(R(T)+2P_j+1)$, we conclude 
\[
       \Reg_T(j) \geq -2P_j+\frac{L_j}{4}
        \ge
        R(T)+1
        >
        R(T).
\]

This contradicts the assumption that \(\mathcal A\) has a regret of at most
\(R(T)\) for every input instance.  Therefore it follows that some round in phase $j$ must satisfy
\[
        p_\tau(j)\ge\frac34.
\]
\end{proof}

\begin{theorem}[Minimax $\CCV$ lower bound for $\CExperts
$]
\label{thm:experts-omega-N-lower-bound}
There exists a numerical constant \(c>0\) such that, for every \(N\ge2\), every
horizon \(T\), and every regret budget \(R(T)\ge0\), the minimax $\CCV$, defined in Eqn.\ \eqref{reg-bud-lb-expert}, is lower bounded as 
\[
        V_N(R,T)
        \ge
        c\,
        \min\left\{
            N-1,\,
            \left\lfloor
            \log_{10}\left(\frac{T}{R(T)+1}\right)
            \right\rfloor_+
        \right\},
\]
where
\[
        \lfloor a\rfloor_+:=\max\{0,\lfloor a\rfloor\}.
\]
In particular, if
\[
        T\ge (R(T)+1)10^{N-1},
\]
which holds for large enough $T$ as long as $R(T)$ is sublinear,
then
\[
        V_N(R,T)=\Omega(N).
\]
\end{theorem}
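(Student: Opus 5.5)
We run the phase construction above for $m:=\min\{N-1,\lfloor\log_{10}(T/(R(T)+1))\rfloor_+\}$ phases, indexed by experts $j=1,\dots,m$, with maximal phase length
\[
L_j:=4\bigl(R(T)+2P_j+1\bigr),
\]
where $P_j$ is the round at which phase $j$ starts. Inside phase $j$, the adversary stops the phase at the first round $\tau_j$ with $p_{\tau_j}(j)\ge 3/4$. At that round it reveals the constraint $g_{\tau_j}(j)=1$ and $g_{\tau_j}(i)=0$ for $i\neq j$. The next phase starts at $P_{j+1}:=\tau_j$, so $P_{j+1}\le P_j+L_j$. After the last phase, all losses and constraints are set to zero.

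Expert $N$ is never constrained, so Assumption~\ref{common-feasibility} holds for every extension of the instance. This makes the feasibility of expert $j$, used in the proof of Lemma~\ref{lem:experts-phase-forcing1}, legitimate. Lemma~\ref{lem:experts-phase-forcing1} applies to every phase: its argument only needs that expert $j$ has not yet been eliminated, which holds because the constraints of phases $1,\dots,j-1$ touch only experts $1,\dots,j-1$. Therefore every phase ends with a round where $p_{\tau_j}(j)\ge 3/4$. At that round the violation is $\langle g_{\tau_j},p_{\tau_j}\rangle_+=p_{\tau_j}(j)\ge 3/4$, so summing over phases gives $\CCV_T\ge \tfrac34 m$.

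The remaining step, and the only delicate one, is to check that all $m$ phases fit in the horizon. Set $Q_j:=P_j+R(T)+1$. Then
\[
Q_{j+1}\le Q_j+L_j=Q_j+4\bigl(R(T)+2P_j+1\bigr)\le Q_j+8Q_j\le 10\,Q_j .
\]
Since $Q_1=R(T)+1$, this gives $P_{m+1}\le Q_{m+1}\le 10^m(R(T)+1)\le T$ by the choice of $m$ (after taking ceilings, the constant $8$ may need to become $9$). So every phase is fully contained in $[T]$, which is exactly what Lemma~\ref{lem:experts-phase-forcing1} needs. If $m=0$ the claim is trivial.

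Since this holds for every algorithm meeting the regret budget, we get $V_N(R,T)\ge\tfrac34 m$, which is the stated bound with $c=3/4$. Finally, when $T\ge (R(T)+1)10^{N-1}$ we have $m=N-1\ge N/2$ for $N\ge 2$, which gives $V_N=\Omega(N)$.
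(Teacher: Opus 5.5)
Your proposal is correct and follows essentially the same argument as the paper: the same choice of $m$, the same forcing lemma giving a violation of at least $3/4$ per phase, and the same $Q_j = P_j + R(T) + 1$ recursion with growth factor $10$ (including the ceiling slack absorbed by $Q_j \ge 1$). Your explicit checks are small clarifications of points the paper leaves implicit: expert $N$ is never constrained, so common feasibility holds; expert $j$ is still active in phase $j$; and each phase at its maximal length $L_j$ fits in $[T]$.
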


\begin{proof}
Fix an algorithm \(\mathcal A\) satisfying
\[
        \sup_I\Reg_T(\mathcal A,I)\le R(T).
\]
Let
\begin{eqnarray}   \label{M-def}     
 M:=
        \min\left\{
            N-1,\,
            \left\lfloor
            \log_{10}\left(\frac{T}{R(T)+1}\right)
            \right\rfloor_+
        \right\}.
        \end{eqnarray}

If \(M=0\), there is nothing to prove, so we assume \(M\ge1\).
Consider the adversarial instance with $M \leq N$ phases as constructed above. Lemma \ref{lem:experts-phase-forcing1} ensures that there exists a round $\tau_j$ in phase $j$ when $p_{\tau_j}(j) \geq \frac{3}{4}.$ 
At this round (which happens to be the last round of phase $j$), the adversary reveals the following constraint vector
\[
        g_{\tau_j}(j)=1,
        \qquad
        g_{\tau_j}(i)=0\quad\text{for }i\neq j.
\]

Therefore, the violation contributed by phase \(j\) is at least
\[
        \langle p_{\tau_j}, g_{\tau_j}\rangle
        =
        p_{\tau_j}(j)
        \ge
        \frac34.
\]

Repeating this construction for \(j=1,\ldots,M\), we conclude that the above adversarial instance yields the following lower bound on the $\CCV:$
\[
        \CCV_T\ge \frac34 M.
\]
Finally, it remains to verify that all $N$ phases fit inside the horizon of length $T$. For this, define the quantity
\[
        Q_j:=P_j+R(T)+1.
\]
Since the length of the $j$\textsuperscript{th} phase is at most
\[
        L_j\le4(R(T)+2P_j+1)+1\leq 8Q_j+1,
\]
we have
\begin{eqnarray*}
        Q_{j+1}
        =
        P_{j+1}+R(T)+1
        \leq 
        P_j+L_j+R(T)+1
        = Q_j+L_j \le 
        Q_j+8Q_j+1
        \le
        10Q_j,
\end{eqnarray*}
where, in the last inequality, we have used the fact that \(Q_j\ge1\).  Therefore
\[
        Q_{M+1}\le10^MQ_1=10^M(R(T)+1).
\]
By the definition of \(M\) given in \eqref{M-def}, we have
\[
        10^M(R(T)+1)\le T.
\]
Thus
\[
        P_{M+1}\le Q_{M+1}\le T,
\]
so all $M$ phases fit inside the horizon. 
%
This concludes the proof of the $\CCV$ lower bound.
\end{proof}

\section{Conclusion and an Open Problem}
\label{sec:conclusion}
In this paper, we considered Constrained Online Convex Optimization (COCO) under the common feasibility assumption, demonstrating that the nested geometry of the successive feasible sets can be exploited for obtaining improved violation bounds. We focus on the fixed-dimensional setting and examine the large-$T$ regime. Using a simple projected gradient-based algorithm, we obtain $\mathcal{O}(\log T)$ bounds for both regret and $\CCV$ for strongly convex losses - an exponential improvement of $\CCV$ over the state-of-the-art result while retaining the minimax optimal regret bound. Furthermore, we achieve the $\mathcal{O}(\sqrt{T})$ $\Regret$ and $O(\sqrt{T})$ $\CCV$ bounds for general convex losses, improving the $\CCV$ bound by a factor of $O(\log T)$. These results are established by exploiting the approximate self-contraction property of the descent curve traced out by the projected gradient descent dynamics. 
We also study the special case of the $\CExperts$ problem and complement our results with matching lower bounds. 
We leave the question of obtaining minimax-optimal rates jointly for an arbitrary dimension $d$ and horizon length $T$ as an open problem. 

\section{Acknowledgement}

This work was supported by the
Department of Atomic Energy, Government of India, under project no. RTI4014 and by a Google India faculty research award.

\clearpage
\bibliographystyle{plain}
\bibliography{refs}
\clearpage

\appendix

\section{Appendix}
\subsection{Proof of Lemma \ref{lem:nested-ogd-regret}} 
\label{reg-bd-proof}
\begin{proof}
The argument closely follows the standard regret analysis of the OGD algorithm with one minor but important
modification: the projection set changes with each round. The standard proof goes through precisely because of the nested structure of the projection step. 

Fix \(x^\star\in S_T\).  The update is
\[
        x_{t+1}
        =
        \Pi_{S_t}(x_t-\eta_t h_t),
        \qquad
        h_t\in\partial f_t(x_t).
\]
Since \(x^\star\in S_t\), non-expansiveness of Euclidean projection gives
\[
        \|x_{t+1}-x^\star\|_2^2
        \le
        \|x_t-\eta_t h_t-x^\star\|_2^2 .
\]
Expanding the right-hand side,
\[
        \|x_{t+1}-x^\star\|_2^2
        \le
        \|x_t-x^\star\|_2^2
        -
        2\eta_t\langle h_t,x_t-x^\star\rangle
        +
        \eta_t^2\|h_t\|_2^2 .
\]
Therefore,
\[
        \langle h_t,x_t-x^\star\rangle
        \le
        \frac{
        \|x_t-x^\star\|^2-\|x_{t+1}-x^\star\|^2
        }{2\eta_t}
        +
        \frac{\eta_t}{2}\|h_t\|^2 .
\]

Since \(f_t\) is \(H_t\)-strongly convex, with \(H_t=0\) allowed, we have
\[
        f_t(x_t)-f_t(x^\star)
        \le
        \langle h_t,x_t-x^\star\rangle
        -
        \frac{H_t}{2}\|x_t-x^\star\|^2 .
\]
Combining the last two inequalities and using \(\|h_t\|_2\le G\), we obtain
\[
\begin{aligned}
        f_t(x_t)-f_t(x^\star)
        &\le
        \frac12\left(
            \frac{1}{\eta_t}
            -
            H_t
        \right)
        \|x_t-x^\star\|^2
        -
        \frac{1}{2\eta_t}
        \|x_{t+1}-x^\star\|^2
        +
        \frac{\eta_tG^2}{2}.
\end{aligned}
\]

Summing over \(1\leq t \leq T\), the RHS yields
\[
\begin{aligned}
&\sum_{t=1}^T
\left[
        \frac12\left(\frac1\eta_t-H_t\right)\|x_t-x^\star\|^2
        -
        \frac{1}{2\eta_t}\|x_{t+1}-x^\star\|^2
\right]   + \frac{G^2}{2}
        \sum_{t=1}^T\eta_t \\
&\qquad =
        \frac12(\frac{1}{\eta_1}-H_1)\|x_1-x^\star\|^2
        +
        \frac12\sum_{t=2}^T
        (\frac{1}{\eta_t}-\frac{1}{\eta_{t-1}}-H_t)\|x_t-x^\star\|^2
        -
        \frac{1}{2\eta_T}\|x_{T+1}-x^\star\|^2 .
\end{aligned}
\]
Dropping the final nonpositive term and using
\[
        \|x_t-x^\star\|\le D
        \qquad
        \forall t,
\]
we obtain
\[
\begin{aligned}
        \sum_{t=1}^T
        \bigl(f_t(x_t)-f_t(x^\star)\bigr)
        &\le
        \frac{D^2}{2}(\frac{1}{\eta_1}-H_1)_+
        +
        \frac{D^2}{2}\sum_{t=2}^T
        (\frac{1}{\eta_t}-\frac{1}{\eta_{t-1}}-H_t)_+  
        +
        \frac{G^2}{2}\sum_{t=1}^T\eta_t .
\end{aligned}
\]
Equivalently,
\[
        \sum_{t=1}^T
        \bigl(f_t(x_t)-f_t(x^\star)\bigr)
        \le
        \frac{D^2}{2}
        \sum_{t=1}^T
        (\frac{1}{\eta_t}-\frac{1}{\eta_{t-1}}-H_t)_+
        +
        \frac{G^2}{2}
        \sum_{t=1}^T\eta_t,
\]
where we have used the convention that $\eta_0^{-1}\equiv 0.$
Taking \(x^\star\in\arg \min_{x\in S_T}\sum_{t=1}^T f_t(x)\) gives the regret
bound.
\end{proof}

\subsection{Pure Feasibility for the $\CExperts$Problem}
\label{sec:pure-feasibility-experts}
In this section, we study a special case of the $\CExperts$problem where all loss vectors are set identically equal to zero (and hence, the regret for any policy is trivially equal to zero). 
In this case, we show that Algorithm \ref{alg:uniform-active-elimination}, described below, achieves \(O(\log N)\) $\CCV$. 

\begin{algorithm}[H]
\caption{Uniform Active Elimination}
\label{alg:uniform-active-elimination}
\begin{algorithmic}[1]
\STATE Initialize the active set \(S_1\gets[N]\).
\FOR{\(t=1,\ldots,T\)}
    \STATE Play the uniform distribution on the active experts:
    \[
        p_t(i)
        =
        \begin{cases}
        1/|S_t|, & i\in S_t,\\
        0, & i\notin S_t.
        \end{cases}
    \]
    \STATE Observe \(g_t\).
    \STATE Delete all active experts that violate:
    \[
        S_{t+1}
        :=
        \{i\in S_t:g_t(i)\le0\}.
    \]
\ENDFOR
\end{algorithmic}
\end{algorithm}

\begin{theorem}[Logarithmic CCV for pure feasibility $\CExperts$problem]
\label{thm:pure-feasibility-experts-upper}
Under common feasibility, Algorithm~\ref{alg:uniform-active-elimination}
satisfies
\[
        \CCV_T
        :=
        \sum_{t=1}^T\langle p_t, g_t\rangle_+
        \le
        H_N
        \le
        1+\log N,
\]
where
\[
        H_N:=\sum_{m=1}^N\frac1m
\]
is the \(N\)-th harmonic number.
\end{theorem}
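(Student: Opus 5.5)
The plan is a charging argument on the uniform distribution over active experts. The key fact is that a round $t$ can only produce a positive violation if it eliminates at least one active expert. This gives a bound on each round's violation in terms of the current active-set size and the number of experts deleted.

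\emph{Step 1 (per-round bound).} Fix a round $t$ and write $n_t:=|S_t|$. Let $B_t:=\{i\in S_t: g_t(i)>0\}$ be the set of experts deleted at round $t$, and $b_t:=|B_t|$. Since $g_t(i)\le 1$ for every $i$, and experts outside $B_t$ contribute nonpositive terms, we get
\[
\langle p_t,g_t\rangle_+ \le \sum_{i\in B_t}\frac{g_t(i)}{n_t}\le \frac{b_t}{n_t}.
\]
In particular, a round with $b_t=0$ contributes nothing.

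\emph{Step 2 (harmonic telescoping).} By the update rule, $n_{t+1}=n_t-b_t$. By common feasibility some $i^\star$ is never deleted, so $n_{t+1}\ge 1$ for all $t$. For each deleted expert, charge $1/n_t$ to it. Then
\[
\frac{b_t}{n_t}\le \sum_{m=n_t-b_t+1}^{n_t}\frac1m = H_{n_t}-H_{n_{t+1}},
\]
because each term $1/m$ with $m\le n_t$ is at least $1/n_t$.

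\emph{Step 3 (summation).} Summing over $t$ telescopes to $H_{n_1}-H_{n_{T+1}}\le H_N-H_1<H_N$. The standard estimate $H_N\le 1+\ln N$ finishes the proof.

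The only delicate point is Step 1: the positive part must be bounded using only the eliminated experts and the bound $g_t\le 1$. Everything else is routine telescoping.
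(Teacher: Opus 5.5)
Your proposal is correct and matches the paper's proof step for step. Both use the per-round bound $\langle p_t,g_t\rangle_+\le b_t/|S_t|$ and the harmonic charging $b_t/|S_t|\le\sum_{m=|S_{t+1}|+1}^{|S_t|}1/m$, then telescope to $H_N$; your observation that the telescoped sum is actually at most $H_N-1$, because $|S_{T+1}|\ge 1$ by common feasibility, is a harmless sharpening.
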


\begin{proof}
Since there exists an expert \(i^\star\) with \(g_t(i^\star)\le0\) for every
\(t\), the active set is never empty.

Let $m_t:=|S_t|$ be the number of active experts at round $t$ and 
let $B_t:=\{i\in S_t:g_t(i)>0\}$ 
be the set of active experts eliminated on round \(t\), and define $b_t:=|B_t|.$ 
Since \(p_t\) is uniform on \(S_t\), and \(g_t(i) \le1\),  the constraint violation at round $t$ can be expressed as:
\begin{eqnarray} \label{eq-ccv-bd}
        \langle p_t,g_t\rangle_+
        \le 
        \frac1{m_t}\sum_{i\in B_t}g_t(i)
        \le
        \frac{b_t}{m_t}.
\end{eqnarray}
After the deletion step, the number of active experts at round $t$ reduces to 
\[
        m_{t+1}=m_t-b_t.
\]
If \(b_t=0\), the round contributes no violation.  If \(b_t>0\), then
\[
        \frac{b_t}{m_t}
        \le
        \sum_{s=m_t-b_t+1}^{m_t}\frac1s
        =
        \sum_{s=m_{t+1}+1}^{m_t}\frac1s,
\]
because each of the \(b_t\) summands on the right is at least \(1/m_t\) and by definition, $m_t >0, \forall t.$
Therefore, from Eqn.\ \eqref{eq-ccv-bd}, we conclude that
\[
\begin{aligned}
        \CCV_T = \sum_{t=1}^T \langle p_t, g_t \rangle 
        &\le
        \sum_{t=1}^T
        \sum_{s=m_{t+1}+1}^{m_t}\frac1s \le
        \sum_{s=1}^N\frac1s
        =
        H_N
        \le
        1+\ln N.
\end{aligned}
\]
\end{proof}

\end{document}